\theoremstyle{plain}
\newtheorem{theorem}{Theorem}
\newtheorem{proposition}{Proposition}
\theoremstyle{remark}
\newcommand{\mname}{\texttt{CSNN}\xspace}
\newcommand{\mnameone}{\texttt{A2D}\xspace}
\newcommand{\mnametwo}{\texttt{RE}\xspace}
\icmltitlerunning{Sun et al. CSNN}
\begin{document}

\twocolumn[
\icmltitle{Curriculum Design Helps Spiking Neural Networks to Classify Time Series}

% It is OKAY to include author information, even for blind
% submissions: the style file will automatically remove it for you
% unless you've provided the [accepted] option to the icml2022
% package.

% List of affiliations: The first argument should be a (short)
% identifier you will use later to specify author affiliations
% Academic affiliations should list Department, University, City, Region, Country
% Industry affiliations should list Company, City, Region, Country

% You can specify symbols, otherwise they are numbered in order.
% Ideally, you should not use this facility. Affiliations will be numbered
% in order of appearance and this is the preferred way.

\begin{icmlauthorlist}
\icmlauthor{Chenxi Sun}{1,2,3}
\icmlauthor{Hongyan Li}{1,2,3}
\icmlauthor{Moxian Song}{1,2,3}
\icmlauthor{Derun Cai}{1,2,3}
\icmlauthor{Shenda Hong}{4,5}

\end{icmlauthorlist}

\icmlaffiliation{1}{Key Laboratory of Machine Perception (Ministry of Education), Peking University, Beijing, China.}
\icmlaffiliation{2}{National Key Laboratory of General Artificial Intelligence, Beijing, China.}
\icmlaffiliation{3}{School of Intelligence Science and Technology, Peking University, Beijing, China.}
\icmlaffiliation{4}{National Institute of Health Data Science, Peking University, Beijing, China.}
\icmlaffiliation{5}{Institute of Medical Technology, Health Science Center of Peking University, Beijing, China.}

\icmlcorrespondingauthor{Chenxi Sun}{sun\_chenxi@pku.edu.cn}

% You may provide any keywords that you
% find helpful for describing your paper; these are used to populate
% the "keywords" metadata in the PDF but will not be shown in the document
\icmlkeywords{Machine Learning, ICML}

\vskip 0.3in
]

% this must go after the closing bracket ] following \twocolumn[ ...

% This command actually creates the footnote in the first column
% listing the affiliations and the copyright notice.
% The command takes one argument, which is text to display at the start of the footnote.
% The \icmlEqualContribution command is standard text for equal contribution.
% Remove it (just {}) if you do not need this facility.

\printAffiliationsAndNotice{}  
% leave blank if no need to mention equal contribution
%\printAffiliationsAndNotice{\icmlEqualContribution} % otherwise use the standard text.

\begin{abstract}

Spiking Neural Networks (SNNs) have a greater potential for modeling time series data than Artificial Neural Networks (ANNs), due to their inherent neuron dynamics and low energy consumption. However, it is difficult to demonstrate their superiority in classification accuracy, because current efforts mainly focus on designing better network structures. In this work, enlighten by brain-inspired science, we find that, not only the structure but also the learning process should be human-like. To achieve this, we investigate the power of Curriculum Learning (CL) on SNNs by designing a novel method named \mname with two theoretically guaranteed mechanisms: The active-to-dormant training order makes the curriculum similar to that of human learning and suitable for spiking neurons; The value-based regional encoding makes the neuron activity to mimic the brain memory when learning sequential data. Experiments on multiple time series sources including simulated, sensor, motion, and healthcare demonstrate that CL has a more positive effect on SNNs than ANNs with about twice the accuracy change, and \mname can increase about 3\% SNNs' accuracy by improving network sparsity, neuron firing status, anti-noise ability, and convergence speed.

\end{abstract}

\section{Introduction} \label{sec:introduction}

Despite producing positive results in the classification of time series, Artificial Neural Networks (ANNs) appear to reach the bottleneck stage: Time series exhibits dynamic evolution, whereas most ANNs are static. Although the purpose is to create brain-like dynamics, ANNs only adhere to the connection mode and do not penetrate deeply into the neurons. To model more data dynamics, even Recurrent Neural Networks (RNNs), which are inherently dynamic, need additional mechanisms \cite{ijcai2021-414}. Meanwhile, accuracy and lightness are frequently incompatible with ANNs, while ambient-assisted living represents the general trend and favors high-efficient and low-consumed models.

A growing efficient paradigm is Spiking Neural Networks (SNNs). They mimic the brain capacity, specifically the intricate dynamics of spiking neurons and the plastic synapses bridging them, and are scientifically plausible. Since each neuron is a dynamic system that can learn both the time and the order relation, it is more suited to depict evolution in real-world time series. Meanwhile, by processing the data as sparse spike events, SNNs offer low power usage.

However, SNNs have struggled to demonstrate a clear advantage over ANNs in accuracy due to the information loss during spike coding, the incomplete modeling of neurotransmitter transmission, and the inability of gradient backpropagation \cite{SRNN}. Consequently, the majority of the effort to increase the accuracy begins with these model structure-related factors. But back to the original intention, SNNs are more brain-like, not only the model structure but also the learning process should resemble that of humans. People usually learn easier knowledge before more complicated ones. Curriculum Learning (CL) has shown that an easy-to-hard training order can enhance the model performance and generalization power for ANNs \cite{DBLP:conf/icml/HacohenW19}. For SNNs, however, this study is lacking.

The capacity of CL to SNN in classifying time series is still unknown to us: ANNs' curriculum may not adapt to SNNs due to their structural differences; Theoretical principles are required to lead SNNs' curriculum; The designed curriculum should be adjusted to account for spiking features; The classification will be impacted by the way to acquire each time series in addition to the learning order among them \cite{DBLP:conf/cikm/SunSCZH022}. Different areas of the brain assist the memory for different pieces in a sequence \cite{brain}. However, by default, all of the neurons in an SNN that process various subparts of a time series are the same.

This work studies the power of CL in training SNNs for the first time. We recognize the temporal dynamics of Recurrent SNN (RSNN) are advantageous for modeling and classifying time series. Based on experimental observations about how training orders and memory modes affect SNNs via spiking trains and neuron firing, and through theoretically guaranteed insights, objectives, and procedures, we propose a Curriculum for SNN (\mname) with two mechanisms:

\vspace{-0.4cm}
\begin{itemize}[leftmargin=10 pt]\setlength{\itemsep}{3pt}
\setlength{\parsep}{0pt}
\setlength{\parskip}{0pt}
\setlength{\topsep}{0pt}
\setlength{\partopsep}{0pt}
    \item Active-to-dormant training order mechanism (\mnameone) uses the sample's activity to customize the training order. The output neuron firing frequency that corresponds to the sample's class label serves as a gauge of its activity.
    \item Value-based regional encoding mechanism (\mnametwo) uses various spiking neuron clusters to encode input spikes that are differentiated by observed values, imitating the regional pattern in the brain when memorizing sequential data.
\end{itemize}
\vspace{-0.3cm}

We show their rationality and effectiveness through theoretical analysis and experiments on three real-world datasets.

\section{Related Work} \label{sec:related work}

\subsection{Classification of Time Series (CTS)}\label{sec:CTS}

Time Series (TS) data is present in practically every task that calls for some kind of human cognitive process due to their inherent temporal ordering \cite{DBLP:journals/datamine/FawazFWIM19}. A TS dataset $\mathcal{D}=\{(X_{i},C_{i})\}_{i=1}^{N}$ has $N$ samples. A sample $X_{i}=\{x_{t}\}_{t=1}^{T}$ has $T$ observed values and time and is labeled with a class $C_{i}\in \mathcal{C}$. The CTS task is $f:X\to C$ with model $f$. Many ANNs have achieved SOTA for CTS tasks, which can be summarized into three categories: recurrent networks like LSTM \cite{pmlr-v119-zhao20c}, convolutional networks like 1D-CNN \cite{pmlr-v139-jiang21d}, and attention networks like Transformer \cite{10.1145/3447548.3467401}. But the dynamics of real-world TS actually vary, such as multiple scales, uneven time intervals, and irregular sampling \cite{DBLP:journals/corr/abs-2010-12493}. Current solutions often design additional mechanisms for ANNs artificiality rather than revise them internally.

\subsection{Spiking Neural Networks (SNNs)}\label{sec:SNN}

An SNN consists of neurons that propagate information from a pressynapse to a postsynapse. At the time $t$, as the synaptic current reaches the neuron it will alter its membrane potential $v(t)$ by a certain amount. If $v(t)$ reaches a threshold $V_{th}$, the pressynapse will emit a spike and reset $v(t)$ to $V_{0}$. In an SNN, each neuron has its own dynamics over time. This expands the modeling options for TS. Most current work for this biologically plausible model examines the network structure, such as the Leaky Integrate and Fire (LIF) model \cite{Ror2006The} and the gradient surrogate updating strategy \cite{DBLP:journals/ijns/KheradpishehM20}. For modeling TS, SNN \cite{DBLP:conf/ijcnn/FangSQ20}, RSNN \cite{SRNN}, and LSNN \cite{DBLP:conf/nips/BellecSSL018} are proposed, but their accuracy is still behind that of non-spiking ANNs of approximate architecture \cite{sciadv}.

\subsection{Curriculum Learning (CL)}\label{sec:CL}

CL is motivated by the curriculum in human learning, attempts at imposing some structure on the training set. It gives a sequence of input mini-batches $\mathcal{D}\rightarrow \mathbb{B}=[\mathcal{B}_{b}]_{b=1}^{B}$. Two subtasks are scoring $f_{s}$ and pacing $f_{p}$: $f_{s}$ ranks samples, $f_{s}:\mathcal{D}\rightarrow \mathbb{R}=[(X_{i},C_{i})]_{i=1}^{N}$, if $f_{s}(i)<f_{s}(j), (X_{i},C_{i})\succ(X_{j},C_{j})$, such as knowledge transfer and self-taught strategies \cite{DBLP:conf/nips/CastellsWR20}; $f_{p}$ determines which sample is presented to the network by giving a sequence of subsets $\mathbb{B}$ of size $f_{p}(b)=|\mathcal{B}_{b}|$, such as single-step and exponential functions \cite{DBLP:conf/ijcai/LinC0J022}. Most work shows that the easy-to-hard training order outperforms the random shuffling \cite{DBLP:conf/icml/HacohenW19}. For CTS tasks, a few methods such as confidence-based CL are proposed \cite{DBLP:conf/cikm/SunSCZH022}. But the examination of curriculum tailored to TS' characteristics is not extensive. Most importantly, none of them can guarantee that they would have a positive effect on SNNs because they are all ANN-specific. Research on CL for SNNs is anticipated.

\section{Methods}\label{sec:method}

As shown in Figure \ref{fig:method}, the proposed curriculum consists of two mechanisms\footnote{All theorems, propositions and proofs involved in the proposed method will be thoroughly discussed in Appendix \ref{sec:appendix}.}: The active-to-dormant training order mechanism (\mnameone) is in Section \ref{sec:method1}; The value-based regional encoding mechanism (\mnametwo) is in Section \ref{sec:method2}. Each of them is introduced through a theoretically supported process that includes \textsc{insight}, \textsc{objective}, and \textsc{procedure}.

\begin{figure*}[t]
\centering
\includegraphics[width=0.9\linewidth]{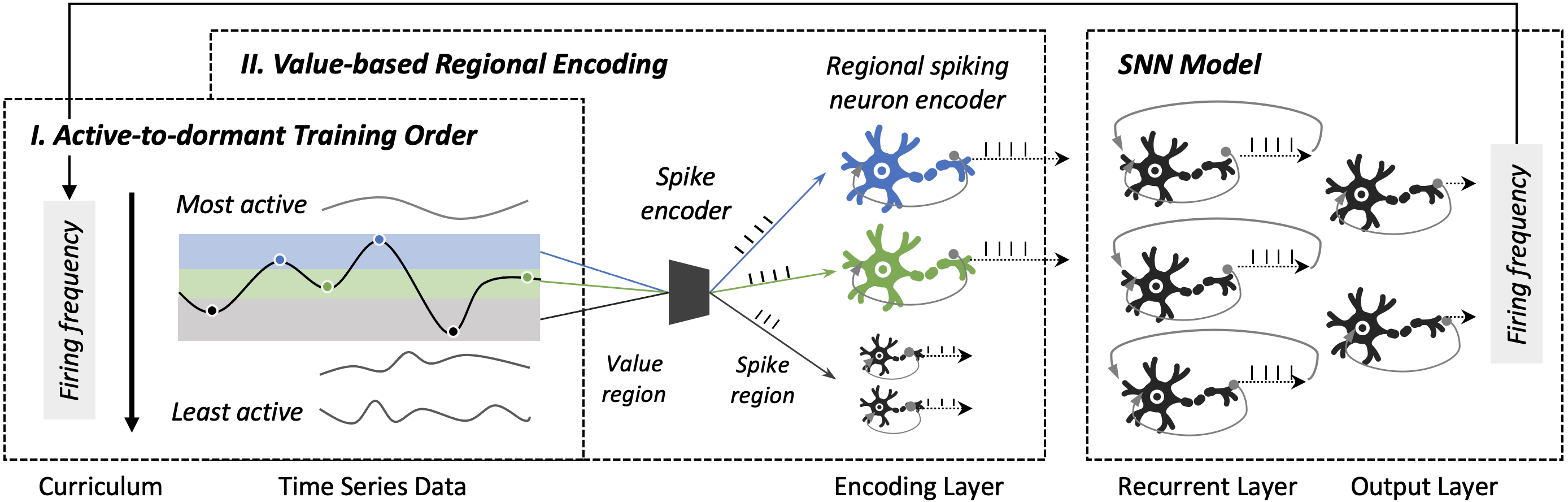}\vspace{-0.3cm}
\caption{Curriculum of Time Series Data for Recurrent Spiking Neural Network}
\label{fig:method}
\end{figure*}

\subsection{Active-to-dormant Training Order (\mnameone)}\label{sec:method1}

\subsubsection*{Insight: Different training orders make the spiking neuron output different spike trains} 

We adopt the a widely used LIF neuron. Each input spike induces a charge in the neuron’s membrane potential, called a Post Synaptic Potential (PSP). In Equation \ref{eq:PSP}, $t_{i}$ is the arrival time of i-th input spike $s_{i}$. $K(t)$ is the synapse kernel, where $\tau_{m}$ and $\tau_{s}$ are time constants. $V_{0}=\frac{\eta}{\eta-1}$ and $\eta=\frac{\tau_{m}}{\tau_{s}}$ scale the maximum value of $K(t)$ to 1.
\begin{equation}\label{eq:PSP}
\small
    \mathrm{PSP}(t)=\sum_{t_{i}}^{t_{i}<t}K(t-t_{i})s_{i}, \ K(t)=V_{0}(e^{-\frac{t}{\tau_{m}}}-e^{-\frac{t}{\tau_{s}}})
\end{equation}

The neuron accumulates all input PSPs and then forms the membrane potential $v(t)$. In Equation \ref{eq:vt}, $N_{I}$ is the number of input synapse. $w_{i}$ is the weight associated with each input synapse. $t_{s}<t$ is the time when the neuron generates an output spike. And the membrane potential is decreased by a factor of the threshold voltage $V_{th}$. This serves as the reset mechanism at the time of spike. 
\begin{equation} \label{eq:vt}
\small
v(t)=\sum_{i}^{N_{I}}w_{i}\mathrm{PSP}_{i}(t)-V_{th}\sum_{t_{s}}^{t_{s}<t}e^{-\frac{t-t_{s}}{\tau}}
\end{equation}

We define the input spike train $I[t]$ and the output spike train $O[t]$ as sequences of time shifted Dirac delta function, where $s[t]=1$ denotes a spike received at time $t$, otherwise $s_{i}=0$. $y[t]>0$ satisfies $v(t)>V_{th}$, otherwise $y[t]=0$ 
\begin{equation} \label{eq:spike train}
\small
I[t]=\sum_{i}^{t}s[n]\delta(t-i),\quad O[t]=\sum_{i}^{t}y[n]\delta(t-i)
\end{equation}

\begin{theorem} \label{theo:snn1}
In SNN, different input orders of spike trains make the neuron output different spike trains.
\end{theorem}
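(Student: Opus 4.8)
The plan is to show that the neuron's membrane potential $v(t)$ in Equation~\ref{eq:vt} is a genuinely order-sensitive functional of the input timing, and then to transfer this sensitivity to the thresholded output. The crucial structural fact is that the synaptic kernel $K(t)=V_{0}(e^{-t/\tau_{m}}-e^{-t/\tau_{s}})$ is \emph{not} constant in its argument: with distinct time constants $\tau_{m}\neq\tau_{s}$ it is a strictly non-constant (rising-then-decaying) function on $t\geq 0$. Hence the contribution $w_{i}K(t-t_{i})$ of a single input spike $s_{i}$ to the sum $\sum_{i}w_{i}\mathrm{PSP}_{i}(t)$ depends essentially on its arrival time $t_{i}$, not merely on whether the spike is present.

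First I would make the notion of ``different input orders'' precise: take two input spike trains $I[t]$ and $I'[t]$ from Equation~\ref{eq:spike train} that carry the same multiset of spike values assigned to different time slots (a permutation). Substituting both into the first term of Equation~\ref{eq:vt} and subtracting, the difference $\Delta v(t)=\sum_{i}w_{i}\bigl(K(t-t_{i})-K(t-t_{i}')\bigr)$ is not identically zero precisely because $K$ is non-constant; a short computation on a minimal two-slot example (a spike in the first slot versus the second) already exhibits $\Delta v(t)\neq 0$ on a whole interval. This establishes order-dependence at the level of the continuous potential.

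The second and harder step is to lift this to the discrete output $O[t]$. Here the obstacle is twofold. The thresholding rule $v(t)>V_{th}\Rightarrow y[t]>0$ could in principle wash out a small difference in $v$, so I would argue either by an explicit construction --- choosing $V_{th}$ and the weights so that the two potentials straddle the threshold at some time, giving a spike under one order but not the other --- or by a genericity argument showing the crossing times cannot coincide for all parameter choices. The second difficulty is the recursive reset term $-V_{th}\sum_{t_{s}<t}e^{-(t-t_{s})/\tau}$, which feeds the \emph{output} times $t_{s}$ back into $v$: once two trajectories produce a first differing spike, their reset contributions diverge as well, so I would use induction on successive spike times to show the divergence is self-reinforcing rather than self-cancelling, yielding $O[t]\neq O'[t]$.

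I expect the main obstacle to be exactly this bridge from the continuous potential to the discrete spike train, since the theorem as stated is really the assertion that the input-to-output map is not permutation-invariant. The clean way to close it is to exhibit one concrete pair of orderings with explicitly chosen parameters for which the outputs provably differ, and then remark that the same kernel-based mechanism applies generically whenever $K$ is non-constant.
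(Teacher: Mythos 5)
Your proposal, once closed with the concrete two-slot construction you sketch, is a valid proof of the literal sentence, but it runs on a genuinely different mechanism than the paper's proof, and the difference is substantive. You read ``different input orders'' as a permutation of spike times fed to a neuron with \emph{fixed} weights, and you exploit the non-constancy of the kernel $K(t)=V_{0}(e^{-t/\tau_{m}}-e^{-t/\tau_{s}})$: spikes clustered in time summate and can push $v(t)$ in Equation~\ref{eq:vt} over $V_{th}$, while the same multiset of spikes spread out cannot. That establishes that the input-to-output map is not permutation-invariant, and your remark that exhibiting a first differing spike (or a spike versus no spike) suffices is right --- the induction through the reset term is only needed if you want persistent divergence, not for the existential claim. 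The paper instead interprets the order as the \emph{training} order of two samples, $S_{0}=(1)^{l}$ and $S_{1}=(0)^{l/2}\land(1)^{l/2}$, and its proof runs through synaptic plasticity: with $w\sim\mathcal{N}(\mu,\sigma^{2})$ and $\frac{\mu}{2}<V_{th}<\mu$, presenting $S_{0}$ first makes the neuron fire and triggers a weight update $w\rightarrow w'$, after which $S_{1}$ can also elicit a spike ($O_{0}=(1,1)$), whereas presenting $S_{1}$ first leaves the neuron below threshold and the weights untouched, giving $O_{1}=(0,1)$.

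The gap between the two readings matters for how the theorem is used downstream. The paper's plasticity-based argument is what licenses the discussion immediately after the theorem --- that training order determines \emph{whether a sample participates in updating the SNN's parameters at all} (in $S_{1}\rightarrow S_{0}$, $S_{1}$ causes no update), and hence affects training stability, efficiency, and noise resistance; this is the foundation of the curriculum-learning story. Your kernel-shape argument holds for a single frozen neuron, involves no learning, and therefore cannot support those conclusions; it also leans on $\tau_{m}\neq\tau_{s}$, whereas the paper's mechanism leans on the weight distribution relative to the threshold. What your route buys is a cleaner, more rigorous dynamical statement (the paper's own proof is informal, with $v\approx\mu$ approximations and a ``may output'' hedge), and it correctly isolates temporal summation as a second, independent source of order sensitivity. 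But as a proof of the theorem \emph{as the paper means it} --- order sensitivity of training across successively presented spike trains --- it is missing the one essential ingredient: the firing-triggered weight update between samples.
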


\begin{proof}
Most simply, assuming an SNN with parameters $w$ initialized randomly under a Gaussian distribution $\mathcal{N}(\mu,\sigma^2), \frac{\mu}{2}<V_{th}<\mu$, an $l$-length input sequence with all-one spike $S_{0}=(1)^{l}$, and an $l$-length input sequence with half-one spike $S_{1}=(0)^{\frac{l}{2}}\land (1)^{\frac{l}{2}}$, when giving two training order $S_{0} \rightarrow S_{1}$ and $S_{1} \rightarrow S_{0}$ to SNN, the membrane potential $v$ of one neuron in the first layer may output two different spike trains $O_{0}=(1,1)\neq O_{1}=(0,1)$.
\end{proof}

The importance of the training order is demonstrated by Theorem \ref{theo:snn1}. And different output spike trains will result in different update times and degrees of parameters, finally forming different SNN \cite{pmlr-v162-chen22ac}. Meanwhile, through the proof, the training order also affects whether a sample participates in updating parameters of SNN. For example, in $S_{1} \rightarrow S_{0}$, $S_{1}$ does not make the neuron fire so that the SNN is not updated. Thus, the training order may also affect training stability, efficiency, and noise resistance.

\subsubsection*{Objective: CL requires consistency between objective function and sampling probability}

We define a maximum function $U_{\vartheta}(X_{i})$ as the objective of CTS task. The hyper-parameter set $\vartheta$ represents a model and its different settings will produce different model functions.
\begin{equation} \label{eq:loss_L_theta}
\small
     \mathcal{U}(\vartheta)=\hat{\mathbb{E}}[U_{\vartheta}]=\sum_{i=1}^{N}U_{\vartheta}(X_{i}), \
    \Tilde{\vartheta}=\arg\max\limits_{\vartheta}\mathcal{U}(\vartheta)
\end{equation}

CL provides a Bayesian prior for data sampling $p_{i}=p(X_{i})$. For example, a non-increasing function of the difficulty level of $X_{i}$, $p(X_{i}) = \frac{1}{N}$ for $N$ training samples whose difficulty score $<\epsilon$, and $p(X_{i}) = 0$ otherwise. The threshold $\epsilon$ is determined by the pacing function which drives a monotonic increase in the number $B$.
\begin{equation} \label{eq:loss_U_theta}
\small
     \mathcal{U}_{p}(\vartheta)=\hat{\mathbb{E}}_{p}[U_{\vartheta}]=\sum_{i=1}^{N}U_{\vartheta}(X_{i})p(X_{i})
\end{equation}

\begin{theorem} \label{theo:cl1}
The difference between objectives $\mathcal{U}_{p}$ and $\mathcal{U}$, which are computed with and without curriculum prior $p$, is the covariance between $U_{\vartheta}$ and $p$.
\begin{equation} \label{eq:U_theta}
\small
    \mathcal{U}_{p}(\vartheta)=\mathcal{U}(\vartheta)+\hat{\mathrm{Cov}}[U_{\vartheta},p]
\end{equation}
\end{theorem}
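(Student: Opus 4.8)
The plan is to read Theorem~\ref{theo:cl1} as a purely algebraic consequence of the bilinear definition of the empirical covariance together with the normalization of the sampling prior $p$, so that no structural property of the model $U_\vartheta$ is ever needed. First I would fix the convention, implicit in Equations~\ref{eq:loss_L_theta} and~\ref{eq:loss_U_theta}, that the empirical operator $\hat{\mathbb{E}}[\cdot]$ aggregates over the $N$ training samples and that the empirical covariance is defined by $\hat{\mathrm{Cov}}[f,g]=\hat{\mathbb{E}}[fg]-\hat{\mathbb{E}}[f]\,\hat{\mathbb{E}}[g]$ for any two sample-indexed quantities $f$ and $g$. Under this convention the theorem is equivalent to showing that the curriculum-induced gap $\mathcal{U}_p(\vartheta)-\mathcal{U}(\vartheta)$ coincides exactly with $\hat{\mathrm{Cov}}[U_\vartheta,p]$.

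The core computation is then a single expansion. Instantiating the covariance definition at $f=U_\vartheta$ and $g=p$, the product term is $\hat{\mathbb{E}}[U_\vartheta\,p]=\sum_{i=1}^{N}U_\vartheta(X_i)p(X_i)$, which is exactly $\mathcal{U}_p(\vartheta)$ by Equation~\ref{eq:loss_U_theta}; the first factor of the cross term is $\hat{\mathbb{E}}[U_\vartheta]=\mathcal{U}(\vartheta)$ by Equation~\ref{eq:loss_L_theta}. The one fact that does the real work is that $p$ is a genuine sampling prior, so its empirical mass is $\hat{\mathbb{E}}[p]=\sum_{i=1}^{N}p(X_i)=1$. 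Substituting these three identifications collapses the cross term to $\mathcal{U}(\vartheta)\cdot 1$, giving $\hat{\mathrm{Cov}}[U_\vartheta,p]=\mathcal{U}_p(\vartheta)-\mathcal{U}(\vartheta)$, and rearranging yields the claimed $\mathcal{U}_p(\vartheta)=\mathcal{U}(\vartheta)+\hat{\mathrm{Cov}}[U_\vartheta,p]$.

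I expect the main obstacle to be bookkeeping rather than argument: the $\hat{\mathbb{E}}$ convention (unnormalized sum versus $\tfrac1N$-average) must be kept consistent across $\mathcal{U}$, $\mathcal{U}_p$, and the covariance, and the normalization $\sum_i p(X_i)=1$ must be stated explicitly, since it is precisely what eliminates the stray factor. It is worth flagging that if one instead treats $\hat{\mathbb{E}}$ as a uniform average, the clean form of the identity requires pairing $U_\vartheta$ with the likelihood ratio $p/p_0$ against the uniform prior $p_0(X_i)=\tfrac1N$ (the form underlying the covariance argument of \cite{DBLP:conf/icml/HacohenW19}); the statement as written corresponds to the unnormalized-sum reading of Equations~\ref{eq:loss_L_theta}--\ref{eq:loss_U_theta}. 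Once the convention is pinned down, the result is an immediate one-line consequence of $\mathrm{Cov}[f,g]=\mathbb{E}[fg]-\mathbb{E}[f]\mathbb{E}[g]$.
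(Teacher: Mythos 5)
Your proposal is correct and takes essentially the same route as the paper's proof: both reduce Theorem~\ref{theo:cl1} to the one-line covariance expansion $\hat{\mathrm{Cov}}[U_{\vartheta},p]=\hat{\mathbb{E}}[U_{\vartheta}\,p]-\hat{\mathbb{E}}[U_{\vartheta}]\hat{\mathbb{E}}[p]$ combined with the prior's normalization $\sum_{i}p(X_{i})=1$, the paper merely writing this as an add-and-subtract regrouping into the deviation-product form. If anything, your explicit handling of the sum-versus-average convention and of $\hat{\mathbb{E}}[p]=1$ is more careful than the paper's own derivation, whose intermediate lines carry loose factors of $N$ (the inserted $2N\hat{\mathbb{E}}(U_{\vartheta})\hat{\mathbb{E}}(p)$ term does not balance the cross terms exactly as written), even though the intended identity and the underlying computation are the same.
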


\begin{proof} Equation \ref{eq:U_theta} is obtained from Equation \ref{eq:loss_U_theta}: \small{$\mathcal{U}_{p}(\vartheta)=\sum_{i=1}^{N}U_{\vartheta}(X_{i})p(X_{i})=\sum_{i=1}^{N}U_{\vartheta}(X_{i})p(X_{i})-2N\hat{\mathbb{E}}(U_{\vartheta})\hat{\mathbb{E}}(p)+2N\hat{\mathbb{E}}(U_{\vartheta})\hat{\mathbb{E}}(p)= \sum_{i=1}^{N}(U_{\vartheta}(X_{i})-\hat{\mathbb{E}}[U_{\vartheta}])(p_{i}-\hat{\mathbb{E}}[p])+N\hat{\mathbb{E}}(U_{\vartheta})\hat{\mathbb{E}}(p)=\hat{\mathrm{Cov}}[U_{\vartheta},p]+\mathcal{U}(\vartheta)$}
\end{proof}

\begin{theorem} \label{theo:cl2}
In CL, the optimal $\Tilde{\vartheta}$ maximizes the covariance between $p$ and $U_{\vartheta}$: $\Tilde{\vartheta}=\arg\max\limits_{\vartheta}\mathcal{U}(\vartheta)=\arg\max\limits_{\vartheta}\hat{\mathrm{Cov}}[U_{\vartheta},p]$, satisfying:
\begin{equation}
\small
    \begin{aligned}
        &\Tilde{\vartheta}=\arg\max\limits_{\vartheta}\mathcal{U}(\vartheta)=\arg\max\limits_{\vartheta}\mathcal{U}_{p}(\vartheta)\\
        &\forall\vartheta \quad \mathcal{U}_{p}(\Tilde{\vartheta})-\mathcal{U}_{p}(\vartheta)\geq \mathcal{U}(\Tilde{\vartheta})-\mathcal{U}(\vartheta)
    \end{aligned} 
\end{equation}
\end{theorem}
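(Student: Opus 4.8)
The plan is to reduce everything to the additive decomposition $\mathcal{U}_p(\vartheta)=\mathcal{U}(\vartheta)+\hat{\mathrm{Cov}}[U_{\vartheta},p]$ established in Theorem \ref{theo:cl1}, and to read the stated equality $\arg\max_{\vartheta}\mathcal{U}(\vartheta)=\arg\max_{\vartheta}\hat{\mathrm{Cov}}[U_{\vartheta},p]$ as the \emph{consistency requirement} that the curriculum prior $p$ must satisfy: the same $\tilde{\vartheta}$ that maximizes the unweighted objective must also maximize the empirical covariance. Both displayed conclusions then follow from this single hypothesis together with the decomposition, so the derivations themselves are short and the burden shifts onto justifying the hypothesis.

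First I would prove that $\tilde{\vartheta}$ also maximizes the curriculum objective $\mathcal{U}_p$. By the consistency hypothesis, for every $\vartheta$ we have simultaneously $\mathcal{U}(\tilde{\vartheta})\ge\mathcal{U}(\vartheta)$ and $\hat{\mathrm{Cov}}[U_{\tilde{\vartheta}},p]\ge\hat{\mathrm{Cov}}[U_{\vartheta},p]$. Adding these two pointwise inequalities and invoking the decomposition gives $\mathcal{U}_p(\tilde{\vartheta})\ge\mathcal{U}_p(\vartheta)$ for all $\vartheta$, i.e. $\tilde{\vartheta}=\arg\max_{\vartheta}\mathcal{U}_p(\vartheta)$. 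This is exactly the first displayed claim, and it uses only the elementary fact that a sum of two functions, each maximized at a common point, is maximized there.

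Next I would establish the margin inequality. Rewriting the curriculum margin through the decomposition,
\begin{equation*}
\small
\mathcal{U}_p(\tilde{\vartheta})-\mathcal{U}_p(\vartheta)=\bigl[\mathcal{U}(\tilde{\vartheta})-\mathcal{U}(\vartheta)\bigr]+\bigl[\hat{\mathrm{Cov}}[U_{\tilde{\vartheta}},p]-\hat{\mathrm{Cov}}[U_{\vartheta},p]\bigr].
\end{equation*}
Because $\tilde{\vartheta}$ maximizes the covariance, the second bracket is nonnegative for every $\vartheta$, so the right-hand side is at least $\mathcal{U}(\tilde{\vartheta})-\mathcal{U}(\vartheta)$, which is precisely $\mathcal{U}_p(\tilde{\vartheta})-\mathcal{U}_p(\vartheta)\ge\mathcal{U}(\tilde{\vartheta})-\mathcal{U}(\vartheta)$. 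Intuitively, the consistent curriculum preserves the optimum while sharpening its suboptimality gap against every competitor.

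The main obstacle is justifying the consistency hypothesis, not the two derivations above. The equality $\arg\max_{\vartheta}\mathcal{U}(\vartheta)=\arg\max_{\vartheta}\hat{\mathrm{Cov}}[U_{\vartheta},p]$ is \emph{not} automatic: it cannot be recovered merely from $\arg\max_{\vartheta}\mathcal{U}=\arg\max_{\vartheta}\mathcal{U}_p$, since the difference of two functions sharing a maximizer need not be maximized at that point. I would therefore treat this equality as the defining property of a \emph{consistent} prior and argue that the concrete construction of $p$ used here---a non-increasing function of difficulty that is, by design, large exactly on the samples where a well-trained $U_{\vartheta}$ scores high---aligns the weighting $p$ with $U_{\vartheta}$ so that their empirical covariance is indeed maximized at $\tilde{\vartheta}$. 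Making this alignment precise for the specific scoring and pacing functions of \mnameone is where the real content lies.
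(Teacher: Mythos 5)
Your proof is correct and takes essentially the same route as the paper: both rest on the decomposition $\mathcal{U}_{p}(\vartheta)=\mathcal{U}(\vartheta)+\hat{\mathrm{Cov}}[U_{\vartheta},p]$ from Theorem \ref{theo:cl1} plus the premise that $\Tilde{\vartheta}$ maximizes the covariance, and your bracketed margin computation is algebraically identical to the paper's chained inequality $\mathcal{U}_{p}(\Tilde{\vartheta})-\mathcal{U}_{p}(\vartheta)\geq \mathcal{U}_{p}(\Tilde{\vartheta})-\mathcal{U}(\vartheta)-\hat{\mathrm{Cov}}[U_{\Tilde{\vartheta}},p]=\mathcal{U}(\Tilde{\vartheta})-\mathcal{U}(\vartheta)$. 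Your two additions---spelling out the first claim, which the paper leaves implicit, and flagging that the consistency equality is a hypothesis needing justification from the concrete prior---accurately anticipate the paper's structure, since that justification is exactly what Proposition \ref{prop:1} supplies via $p\propto U_{\Tilde{\vartheta}}$.
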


\begin{proof} From Theorem \ref{theo:cl1}, \small{$\mathcal{U}_{p}(\Tilde{\vartheta})-\mathcal{U}_{p}(\vartheta)=\mathcal{U}_{p}(\Tilde{\vartheta})-\mathcal{U}(\vartheta)-\hat{\mathrm{Cov}}[U_{\vartheta},p]\geq \mathcal{U}_{p}(\Tilde{\vartheta})-\mathcal{U}(\vartheta)-\hat{\mathrm{Cov}}[U_{\Tilde{\vartheta}},p]=\mathcal{U}(\Tilde{\vartheta})-\mathcal{U}(\vartheta)$}
\end{proof}

It demonstrates that, more so than with any other $U_{\vartheta}(X)$, $p$ has a positive correlation with the optimal $U_{\Tilde{\vartheta}}(X)$. The gradients in the new optimization landscape may thus be generally steeper in the direction of the optimal parameter $\Tilde{\vartheta}$. The original problem's global optimum is present in the updated optimization landscape created by CL, sharing the trait of having a more apparent global maximum.

\subsubsection*{Procedure: The active-to-dormant training order for SNNs meets the goal of CL}

What sort of training order can accommodate SNN while achieving the CL's goal? We demonstrate that the proposed active-to-dormant training order satisfies Theorem \ref{theo:cl1}, \ref{theo:cl2}.

We measure the activity $pC_{i}$ of a TS sample $X_{i}$ based on the distribution of firing frequency in the output layer in Equation \ref{eq:pC}. $L$ is the number of network layers, $O_{i}^{L}[t]$ denotes the output of last layer, $N_{C}$ is the neuron number of last layer. Thus, an ideal curriculum is the prior corresponding to the optimal hypothesis $p_{i}=\frac{e^{pC(X_{i})}}{P}, P=\sum_{i=1}^{N}e^{pC(X_{i})}$.
\begin{equation} \label{eq:pC}
\small
pC(X_{i})=\frac{e^{\sum_{t}^{T}O_{i}^{L}[t]}}{\sum_{j=1}^{N_{c}}e^{\sum_{t}^{T}O_{j}^{L}[t]}}
\end{equation}

In fact, Equation \ref{eq:pC} is consistent with the objective of the CTS task. In SNN, the neuron in the output layer that fires most frequently represents the result. Thus, $pC_{i}$ and $N_{C}$ can be seen as the calculated probability of each class and the class number. The cross-entropy loss is defined as Equation \ref{eq:loss}, where $C_{i}$ is the class label. $L_{\vartheta}(X_{i})$ denotes the loss of hypothesis defined by $\vartheta$ when given an sample $X_{i}$. The empirical risk minimization compute the best hypothesis of $\Tilde{\vartheta}$ from the training data.
\begin{equation} \label{eq:loss}
\small
    \mathcal{L}=\hat{\mathbb{E}}[L],\
    L(X_{i})=-C_{i}\log(pC_{i})
\end{equation}
\begin{equation} 
\small
\begin{aligned}
    \Tilde{\vartheta}&=\arg\min\limits_{\vartheta}\mathcal{L}(\vartheta)=\arg\min\limits_{\vartheta}\sum_{i=1}^{N}L_{\vartheta}(X_{i})\\
    &=\arg\max\limits_{\vartheta}\exp{(-\sum_{i=1}^{N}L_{\vartheta}(X_{i}))}=\arg\max\limits_{\vartheta}\prod_{i=1}^{N}e^{-L_{\vartheta}(X_{i})}
\end{aligned} \nonumber
\end{equation}

According to Equation \ref{eq:loss}, we specify $U$ and $p$ based on the maximum likelihood estimation with probability for empirical risk minimization $P(\vartheta|X)\propto e^{-L_{\vartheta(X)}}$.
\begin{equation}\label{eq:U}
\small
    U_{\vartheta}(X_{i})=e^{-L_{\vartheta}(X_{i})}
\end{equation}
\begin{equation}\label{eq:p}
\small
    p_{i}=\frac{e^{-L_{\Tilde{\vartheta}}(X_{i})}}{\sum_{i}^{N}e^{-L_{\Tilde{\vartheta}}(X_{i})}}=\frac{U_{\Tilde{\vartheta}}(X_{i})}{P}
\end{equation}

\begin{proposition}\label{prop:1}
    When using the active-to-dormant training order, Theorem \ref{theo:cl2} holds if the variance of the maximum function is roughly constant in the relevant range of plausible parameter values.
\begin{equation}\small
\begin{aligned}
    \mathcal{U}_{p}(\Tilde{\vartheta})-\mathcal{U}_{p}(\vartheta)\geq \mathcal{U}(\Tilde{\vartheta})-\mathcal{U}(\vartheta)\quad \forall\vartheta: \mathrm{Cov}[U_{\vartheta},U_{\Tilde{\vartheta}}]\leq\mathrm{Var}[U_{\Tilde{\vartheta}}]
\end{aligned}\nonumber
\end{equation}
\end{proposition}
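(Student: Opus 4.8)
The plan is to reduce the target inequality to a single comparison of covariances by invoking Theorem~\ref{theo:cl1}, and then to exploit the fact that the active-to-dormant prior $p$ is, by its construction in Equation~\ref{eq:p}, proportional to the optimal maximum function $U_{\Tilde{\vartheta}}$. Substituting $\mathcal{U}_{p}(\vartheta)=\mathcal{U}(\vartheta)+\hat{\mathrm{Cov}}[U_{\vartheta},p]$ into both $\mathcal{U}_{p}(\Tilde{\vartheta})$ and $\mathcal{U}_{p}(\vartheta)$, the difference $\mathcal{U}_{p}(\Tilde{\vartheta})-\mathcal{U}_{p}(\vartheta)$ equals $(\mathcal{U}(\Tilde{\vartheta})-\mathcal{U}(\vartheta))+(\hat{\mathrm{Cov}}[U_{\Tilde{\vartheta}},p]-\hat{\mathrm{Cov}}[U_{\vartheta},p])$, so the desired inequality $\mathcal{U}_{p}(\Tilde{\vartheta})-\mathcal{U}_{p}(\vartheta)\geq \mathcal{U}(\Tilde{\vartheta})-\mathcal{U}(\vartheta)$ collapses to
\begin{equation}\nonumber
\small
\hat{\mathrm{Cov}}[U_{\Tilde{\vartheta}},p]\geq \hat{\mathrm{Cov}}[U_{\vartheta},p].
\end{equation}
Thus the entire proposition hinges on this one covariance inequality.

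Next I would use Equation~\ref{eq:p}, which states $p_{i}=U_{\Tilde{\vartheta}}(X_{i})/P$ with normalizing constant $P=\sum_{i}^{N}U_{\Tilde{\vartheta}}(X_{i})>0$. Because $P$ does not depend on the sample index, the empirical covariance is linear in its second argument, so $\hat{\mathrm{Cov}}[U_{\vartheta},p]=\tfrac{1}{P}\hat{\mathrm{Cov}}[U_{\vartheta},U_{\Tilde{\vartheta}}]$ and likewise $\hat{\mathrm{Cov}}[U_{\Tilde{\vartheta}},p]=\tfrac{1}{P}\hat{\mathrm{Cov}}[U_{\Tilde{\vartheta}},U_{\Tilde{\vartheta}}]=\tfrac{1}{P}\mathrm{Var}[U_{\Tilde{\vartheta}}]$. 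Cancelling the common positive factor $\tfrac{1}{P}$, the displayed covariance inequality is exactly equivalent to the stated condition $\mathrm{Cov}[U_{\vartheta},U_{\Tilde{\vartheta}}]\leq \mathrm{Var}[U_{\Tilde{\vartheta}}]$, which settles the core of the argument.

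Finally I would connect this condition to the ``roughly constant variance'' hypothesis. By the Cauchy--Schwarz inequality for the empirical covariance, $\mathrm{Cov}[U_{\vartheta},U_{\Tilde{\vartheta}}]\leq \sqrt{\mathrm{Var}[U_{\vartheta}]\,\mathrm{Var}[U_{\Tilde{\vartheta}}]}$; if the variance of the maximum function is approximately the same for $\vartheta$ and $\Tilde{\vartheta}$ over the relevant range of plausible parameters, i.e.\ $\mathrm{Var}[U_{\vartheta}]\approx \mathrm{Var}[U_{\Tilde{\vartheta}}]$, then the right-hand side is $\approx \mathrm{Var}[U_{\Tilde{\vartheta}}]$ and the required condition follows. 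This is precisely where the assumption is consumed.

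I expect the delicate point to be this last step rather than the algebra: the condition is recovered only up to the accuracy of the constant-variance approximation, so the inequality holds rigorously only when $\mathrm{Var}[U_{\vartheta}]\leq \mathrm{Var}[U_{\Tilde{\vartheta}}]$, and the ``roughly constant'' phrasing hides an approximation that should be stated carefully. A secondary item to verify is that the bilinearity step is legitimate under the paper's convention in which $\hat{\mathbb{E}}$ denotes a sum rather than an average; since scaling by the constant $\tfrac{1}{P}$ behaves identically under either normalization, this causes no real difficulty, but it is worth checking that the same normalization is used consistently for $\hat{\mathrm{Cov}}$ and $\mathrm{Var}$ throughout.
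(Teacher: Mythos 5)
Your proof is correct and takes essentially the same route as the paper's: both substitute the active-to-dormant prior $p_{i}=U_{\Tilde{\vartheta}}(X_{i})/P$ from Equation \ref{eq:p} into the covariance decomposition of Theorem \ref{theo:cl1} to obtain $\mathcal{U}_{p}(\vartheta)=\mathcal{U}(\vartheta)+\frac{1}{P}\mathrm{Cov}[U_{\vartheta},U_{\Tilde{\vartheta}}]$, and both close the argument via Cauchy--Schwarz plus the roughly-constant-variance assumption. Your arrangement is marginally sharper in that you prove the displayed inequality as an exact equivalence with the condition $\mathrm{Cov}[U_{\vartheta},U_{\Tilde{\vartheta}}]\leq\mathrm{Var}[U_{\Tilde{\vartheta}}]$ before the approximation is consumed (and you correctly flag that, strictly, one needs $\mathrm{Var}[U_{\vartheta}]\leq\mathrm{Var}[U_{\Tilde{\vartheta}}]$), whereas the paper instead routes through the argmax claim $\mathcal{U}_{p}(\vartheta)\leq\mathcal{U}(\Tilde{\vartheta})+\frac{b}{P}=\mathcal{U}_{p}(\Tilde{\vartheta})$, but the decomposition, key lemma, and use of the hypothesis are the same.
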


\begin{proof}
From Equation \ref{eq:p}, $\mathcal{U}_{p}(\vartheta)=\mathcal{U}(\vartheta)+\frac{1}{P}\mathrm{Cov}[U_{\vartheta},U_{\Tilde{\vartheta}}]$. Then, at the optimal point $\Tilde{\vartheta}$: $\mathcal{U}_{p}(\Tilde{\vartheta})=\mathcal{U}(\Tilde{\vartheta})+\frac{1}{P}\mathrm{Var}[U_{\Tilde{\vartheta}}]$; at any other point: $\mathcal{U}_{p}(\vartheta)\leq\mathcal{U}(\Tilde{\vartheta})+\frac{1}{P}\sqrt{\mathrm{Var}[U_{\vartheta}]\mathrm{Var}[U_{\Tilde{\vartheta}}]}$. Assuming a constant $b=\mathrm{Var}[U_{\vartheta}]$, Theorem \ref{theo:cl2} follows $\mathcal{U}_{p}(\vartheta)\leq\mathcal{U}(\Tilde{\vartheta})+\frac{b}{P}=\mathcal{U}_{p}(\Tilde{\vartheta})$, i.e. $\Tilde{\vartheta}=\arg\max\limits_{\vartheta}\mathcal{U}_{p}(\vartheta)$.
\end{proof}

According to Proposition \ref{prop:1}, the optimization landscape is altered in order to emphasize the contrast between the vector of optimal parameters and all other parameter values that are covariant with the optimal solution. And our CL strategy $\Tilde{\vartheta}$, active-to-dormant training order, can make the variance of sample activity $\mathrm{Var}[U_{\Tilde{\vartheta}}]$ greater. From the perspective of the loss function, the strategy of active-to-dormant training order for SNNs is consistent with that of easy-to-hard training order for ANNs in the classical CL.

\subsection{Value-based Regional Encoding (\mnametwo)}\label{sec:method2}

\subsubsection*{Insight: Sequential data are memorized by the human brain through distinct regions}

According to recent studies \cite{brain}, the brain memorizes sequential information in multiple regions with a geometrical structure rather than just one. This type of regional memory can boost spike firing. Inspired by this insight, we suggest a regional spiking mechanism based on TS values for SNN. In this way, on the basis of imitating the information transmission mechanism of the brain, SNN also mimics the sequence processing mechanism of the brain.

\subsubsection*{Objective: CL benefits from SNN's improved sample discrimination.}

Based on Proposition \ref{prop:1}, making $\mathrm{Var}[U_{\Tilde{\vartheta}}]-\mathrm{Cov}[U_{\vartheta},U_{\Tilde{\vartheta}}]$ larger will boost the advantages of utilizing CL for SNN on the CTS task. To achieve this goal, $\mathrm{Var}[U_{\Tilde{\vartheta}}]$ should be larger, and/or $\mathrm{Cov}[U_{\vartheta},U_{\Tilde{\vartheta}}]$ should be smaller, i.e., the variance of each sample's $U_{\Tilde{\vartheta}}(X_{i})$ becomes larger, or/and $U_{\Tilde{\vartheta}}(X)$ diverges from $U_{\vartheta}(X)$ more.

\begin{theorem}
 \label{theo:ri1}
The difference between the objective score with CL and that without CL will be greater if there is a larger score discrepancy between each sample.
\end{theorem}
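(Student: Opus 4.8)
The plan is to rewrite the quantity ``difference between the objective score with CL and that without CL'' purely in terms of the variance of the sample scores, and then to read off its monotonicity in that variance. I would begin from the identity proved in Proposition \ref{prop:1}, $\mathcal{U}_{p}(\vartheta)=\mathcal{U}(\vartheta)+\frac{1}{P}\mathrm{Cov}[U_{\vartheta},U_{\Tilde{\vartheta}}]$, which at the optimal hypothesis specializes to $\mathcal{U}_{p}(\Tilde{\vartheta})=\mathcal{U}(\Tilde{\vartheta})+\frac{1}{P}\mathrm{Var}[U_{\Tilde{\vartheta}}]$. Subtracting the two gives the CL advantage
\begin{equation}
\small
\Delta:=[\mathcal{U}_{p}(\Tilde{\vartheta})-\mathcal{U}_{p}(\vartheta)]-[\mathcal{U}(\Tilde{\vartheta})-\mathcal{U}(\vartheta)]=\frac{1}{P}\left(\mathrm{Var}[U_{\Tilde{\vartheta}}]-\mathrm{Cov}[U_{\vartheta},U_{\Tilde{\vartheta}}]\right),\nonumber
\end{equation}
which is exactly the quantity highlighted in the Objective paragraph, so the theorem reduces to showing that $\Delta$ grows with the inter-sample score spread.

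The second step is to identify the ``score discrepancy between each sample'' with the spread $\mathrm{Var}[U_{\Tilde{\vartheta}}]$ of the per-sample scores $U_{\Tilde{\vartheta}}(X_{i})$, and to factor the covariance through the correlation coefficient $\rho$ as $\mathrm{Cov}[U_{\vartheta},U_{\Tilde{\vartheta}}]=\rho\,\sigma_{\vartheta}\sigma_{\Tilde{\vartheta}}$ with $\sigma_{\Tilde{\vartheta}}=\sqrt{\mathrm{Var}[U_{\Tilde{\vartheta}}]}$. Invoking the ``roughly constant variance'' regime already assumed in Proposition \ref{prop:1}, i.e.\ treating $\sigma_{\vartheta}\approx\sigma_{\Tilde{\vartheta}}$, collapses the advantage to $\Delta=\frac{1}{P}\mathrm{Var}[U_{\Tilde{\vartheta}}]\,(1-\rho)$. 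Since $P>0$ and $\rho\le 1$, this is non-decreasing in $\mathrm{Var}[U_{\Tilde{\vartheta}}]$, and strictly increasing whenever the suboptimal hypothesis is not perfectly correlated with the optimum ($\rho<1$); this is precisely the assertion that a larger discrepancy between sample scores produces a larger CL advantage.

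The main obstacle is that $\mathrm{Var}[U_{\Tilde{\vartheta}}]$ and $\mathrm{Cov}[U_{\vartheta},U_{\Tilde{\vartheta}}]$ are not free to vary independently: enlarging the score spread can simultaneously enlarge the covariance, so the two terms of $\Delta$ cannot be treated in isolation. I would control this coupling with the Cauchy--Schwarz bound $\mathrm{Cov}[U_{\vartheta},U_{\Tilde{\vartheta}}]\le\sigma_{\vartheta}\sigma_{\Tilde{\vartheta}}$, which caps the covariance's growth at a rate linear in $\sigma_{\Tilde{\vartheta}}$ while the variance term $\sigma_{\Tilde{\vartheta}}^{2}$ grows quadratically, so the variance eventually dominates; the constant-variance assumption of Proposition \ref{prop:1} then pins $\sigma_{\vartheta}$ and leaves $\rho\le 1$ as the only free coupling parameter, forcing $\Delta\ge 0$ with strict growth for $\rho<1$. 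The residual care is bookkeeping: confirming $P=\sum_{i}e^{-L_{\Tilde{\vartheta}}(X_{i})}>0$ and phrasing ``larger discrepancy'' as an increase of $\sigma_{\Tilde{\vartheta}}$ within the plausible-parameter range rather than as an arbitrary perturbation that breaks the constant-variance hypothesis.
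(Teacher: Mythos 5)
Your proposal is correct and reaches the paper's conclusion, but it handles the crucial step differently from the paper's own proof. The paper argues via the sampling prior: since $p_{i}=U_{\Tilde{\vartheta}}(X_{i})/P$ with $P\approx$ constant, it notes $\mathrm{Var}[U_{\Tilde{\vartheta}}]\propto\mathrm{Var}[p]$, and then simply asserts the chain $\mathrm{Var}[p]\uparrow\rightarrow\mathrm{Var}[U_{\Tilde{\vartheta}}]\uparrow\rightarrow\mathrm{Cov}[U_{\vartheta},U_{\Tilde{\vartheta}}]\downarrow$; the final implication is stated without justification, and as a general matter it is false --- inflating $\mathrm{Var}[U_{\Tilde{\vartheta}}]$ can just as well inflate the covariance. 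You sidestep exactly this weak link: by writing the CL advantage explicitly as $\Delta=\frac{1}{P}\left(\mathrm{Var}[U_{\Tilde{\vartheta}}]-\mathrm{Cov}[U_{\vartheta},U_{\Tilde{\vartheta}}]\right)$ and factoring $\mathrm{Cov}[U_{\vartheta},U_{\Tilde{\vartheta}}]=\rho\,\sigma_{\vartheta}\sigma_{\Tilde{\vartheta}}$, you show the \emph{gap} grows with the score spread without ever needing the covariance itself to decrease, which is both more defensible and makes precise what ``difference between the objective score with CL and without CL'' means (the paper leaves this implicit, ending its proof at the covariance claim). What the paper's route buys is a direct link to the prior $p$, i.e.\ to the quantity the \mnametwo mechanism actually manipulates; your route buys a sound monotonicity statement. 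One wrinkle you should iron out: your two control arguments live in different regimes. The correlation-coefficient computation sets $\sigma_{\vartheta}\approx\sigma_{\Tilde{\vartheta}}$ (both grow together, giving $\Delta=\frac{1}{P}\sigma_{\Tilde{\vartheta}}^{2}(1-\rho)$, increasing provided $\rho$ stays bounded away from $1$), whereas your Cauchy--Schwarz ``quadratic beats linear'' domination holds $\sigma_{\vartheta}$ fixed while $\sigma_{\Tilde{\vartheta}}$ grows --- which contradicts the constant-variance hypothesis you invoked a sentence earlier. Either regime alone delivers the theorem, so this is a presentational inconsistency rather than a gap, but you should commit to one and state the corresponding side condition ($\rho<1$ fixed, or $\sigma_{\vartheta}$ pinned) explicitly.
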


\begin{proof} 
Without CL, since the training samples are fixed, $\small{\mathrm{Var}[U_{\vartheta}]}$ is constant; With CL, based on Equation \ref{eq:p}, as $\small{P \approx \mathbb{E}[e^{-L(X_{i})}]}$ is constant, $\small{\mathrm{Var}[U_{\Tilde{\vartheta}}]\propto \mathrm{Var}[p]}$. Thus, $\small{\mathrm{Var}[p]\uparrow\rightarrow \mathrm{Var}[U_{\Tilde{\vartheta}}]\uparrow\rightarrow \mathrm{Cov}[U_{\vartheta},U_{\Tilde{\vartheta}}]\downarrow}$
\end{proof}

Theorem \ref{theo:ri1} states that if the CL strategy increases the difference among the score value $p$ of each sample, $\mathrm{Var}[U_{\Tilde{\vartheta}}]-\mathrm{Cov}[U_{\vartheta},U_{\Tilde{\vartheta}}]$ will be larger, CL's benefits will be more clear.

\subsubsection*{Procedure: The value-based regional encoding boosts SNN's differential response to samples}

How to make SNN response to various TS samples differ more? We suggest the value-based regional encoding mechanism by replicating human brain processing sequence.

For a TS $X=\{x_{t}\}_{t=1}^{T}$, in the classical SNNs, the input spike train $I[t]$ in Equation \ref{eq:spike train} corresponding to each observed value $x_{t}$ will be input to all $N_{\mathrm{input}}$ neurons of the input layer, so all membrane potentials are Equation \ref{eq:vt} with $t=1,...,T$.  

Different from the input of classical SNNs, in value-based regional encoding mechanism, different neurons receive the input having different observed value: We divide the interval $[\min(x), \max(x)]$ into $M$ subintervals $\mathbb{I}=\{\mathcal{I}_{m}\}_{m=1}^{M}, M\leq N_{\mathrm{input}}$, thus every $\frac{N_{\mathrm{input}}}{M}$ neurons are responsible for a numerical interval and receive different input spike trains. The membrane potential of neurons having $\mathcal{I}_{m}$ are Equation \ref{eq:vt} with $t$ satisfing $x_{t}\in \mathcal{I}_{m}$.

\begin{proposition}\label{prop:2}
    The regional encoding contributes to SNNs' CL by increasing the variation of sample activity and assisting the model's quick response to input changes.
\end{proposition}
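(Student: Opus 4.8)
The plan is to verify the two assertions of Proposition \ref{prop:2} by reducing them, through Theorem \ref{theo:ri1}, to the single quantity $\mathrm{Var}[U_{\Tilde{\vartheta}}]-\mathrm{Cov}[U_{\vartheta},U_{\Tilde{\vartheta}}]$ whose enlargement is what strengthens the CL effect in Proposition \ref{prop:1}. Since Equation \ref{eq:p} gives $p_{i}\propto U_{\Tilde{\vartheta}}(X_{i})$ and $P$ is essentially constant across the dataset, it suffices to show that value-based regional encoding (i) enlarges the spread of the sample activity $\mathrm{Var}[U_{\Tilde{\vartheta}}]$, equivalently $\mathrm{Var}[p]$, and (ii) makes the firing response more sensitive to changes in the observed values, which lowers $\mathrm{Cov}[U_{\vartheta},U_{\Tilde{\vartheta}}]$ by Theorem \ref{theo:ri1}. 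I would therefore treat the ``increasing variation'' clause and the ``quick response'' clause as the two halves of the argument.

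For the first half, I would compare the activity map $X\mapsto pC(X)$ under the classical encoding of Equation \ref{eq:vt}, where every input neuron integrates the same spike train, against the regional encoding, where the $M$ groups of $\frac{N_{\mathrm{input}}}{M}$ neurons integrate only those values with $x_{t}\in\mathcal{I}_{m}$. The key step is a variance decomposition: writing the output firing as a function of the per-interval occupancy profile $(\,|\{t:x_{t}\in\mathcal{I}_{m}\}|\,)_{m=1}^{M}$, regional encoding makes $pC(X)$ depend on this full $M$-dimensional profile, whereas the shared-input encoding is insensitive to how values are distributed across intervals. Two samples with matching aggregate statistics but different value distributions thus collapse to nearly equal activities under classical encoding but separate under regional encoding, adding a nonnegative between-group component to $\mathrm{Var}[U_{\Tilde{\vartheta}}]$. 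I would make this precise with a two-sample instance in the spirit of the proof of Theorem \ref{theo:snn1}, exhibiting $X_{i},X_{j}$ whose regional activities differ while their classical activities coincide.

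For the second half, I would argue that specialization of neuron groups to disjoint value ranges sharpens the response: when a value $x_{t}$ crosses an interval boundary $\partial\mathcal{I}_{m}$, the induced PSP in Equation \ref{eq:PSP} is routed to a different group rather than being smeared across a single shared population, so the membrane potential $v(t)$ in Equation \ref{eq:vt}, and hence the emitted spike train, reacts immediately to the input change. Formally this means the sensitivity of $U_{\vartheta}(X)$ to perturbations of $X$ is larger and more sample-specific under regional encoding, so the activity vectors of the current hypothesis and the optimal one become less aligned, i.e. $\mathrm{Cov}[U_{\vartheta},U_{\Tilde{\vartheta}}]$ shrinks. Combining the two halves, $\mathrm{Var}[U_{\Tilde{\vartheta}}]-\mathrm{Cov}[U_{\vartheta},U_{\Tilde{\vartheta}}]$ grows, which by Proposition \ref{prop:1} and Theorem \ref{theo:ri1} is exactly the claimed contribution to CL.

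The main obstacle will be the nonlinearity of the spiking threshold: the map from the value-interval profile to $pC(X)$ is not available in closed form, because the reset term and the threshold crossing in Equation \ref{eq:vt} make the firing count a non-smooth function of the input. I expect to circumvent this exactly as the earlier results do, either by passing to the expected firing rate as a surrogate that is monotone in the per-interval input mass, or by restricting to an explicit low-dimensional instance, rather than attempting a general closed-form variance computation. Establishing that the added between-group variance is genuinely nonnegative, and not offset by a reduction of within-group variance once $\frac{N_{\mathrm{input}}}{M}$ neurons share a coarser input, is the delicate point that the formal argument must secure.
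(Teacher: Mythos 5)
Your first half is a genuinely different route from the paper's, and a defensible one. The paper does not argue via an occupancy-profile variance decomposition; instead it classifies, at each time $t$, the neurons with $x_{t}\in\mathcal{I}_{m}$ as excitatory ($\mathrm{S}_{E}$) and the rest as inhibitory ($\mathrm{S}_{I}$), observes that inhibitory potentials in Equation \ref{eq:vt} are only discharged (the reset term alone survives), treats regional encoding as an extreme case of feed-forward/feed-back inhibition, and then cites an external result that such inhibition changes the spiking rate so that $U_{\vartheta}$ in Equation \ref{eq:U} increases under the true class label, widening the gaps in Equation \ref{eq:loss_L_theta} across samples and hence raising $\mathrm{Var}[U_{\Tilde{\vartheta}}]$. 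Your construction of two samples that coincide under shared-input encoding but separate under regional encoding, in the spirit of Theorem \ref{theo:snn1}, is more self-contained than the paper's citation-based step, and you correctly flag the delicate within-group versus between-group variance trade-off. Had the proposition consisted only of the variance clause, your plan would be an acceptable, arguably more elementary, alternative.

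The genuine gap is in your second half, in two respects. First, you misapply Theorem \ref{theo:ri1}: its implication runs $\mathrm{Var}[p]\uparrow\rightarrow\mathrm{Var}[U_{\Tilde{\vartheta}}]\uparrow\rightarrow\mathrm{Cov}[U_{\vartheta},U_{\Tilde{\vartheta}}]\downarrow$, with enlarged score discrepancy as the antecedent; it gives no license for the inference ``sensitivity of $U_{\vartheta}(X)$ to perturbations of $X$ is larger, hence $\mathrm{Cov}[U_{\vartheta},U_{\Tilde{\vartheta}}]$ shrinks.'' Indeed, if regional encoding makes every hypothesis's response sharper in the same direction, the activities of $U_{\vartheta}$ and $U_{\Tilde{\vartheta}}$ could become \emph{more} aligned, not less, so the claimed covariance reduction does not follow from your routing argument. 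Second, by folding both clauses into the single quantity $\mathrm{Var}[U_{\Tilde{\vartheta}}]-\mathrm{Cov}[U_{\vartheta},U_{\Tilde{\vartheta}}]$ you change what the ``quick response'' clause asserts: in the paper it is a dynamical statement about the network, proved via a balanced excitation/inhibition mean-field analysis in which the input $I_{i}[t]=I^{ext}_{i}[t]+I^{rec}_{i}[t]$ has approximately constant variance-to-mean ratio $\beta_{e/i}$, yielding the firing-rate law $r_{e/i}=\frac{\mu_{e/i}}{V_{th}V_{0}}\propto\mu_{ext}$ of Equation \ref{eq:r}; linearity of the rate in the external input mean is what ``ensures the network's response to input changes is very fast.'' Your boundary-crossing routing intuition establishes no rate law or response-time statement at all, so under your plan the second clause of the proposition remains unproven: you would prove, at best, a different proposition in which ``quick response'' has been replaced by a (non-sequitur) covariance bound.
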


\begin{proof}
    At time $t$, we call spiking neurons with $x_{t}\in \mathcal{I}_{m}$ as excitatory neurons $\mathrm{S}_{E}$ and neurons with $x_{t}\notin \mathcal{I}_{m}$ as inhibitory neurons $\mathrm{S}_{I}$. For $\mathrm{S}_{I}$, the membrane potential is only discharged but not charged, i.e. Equation \ref{eq:vt} only has the rightmost item. Thus, we can regard it as an extreme case of feed-forward and feed-back inhibition. The inhibition changes piking rate \cite{2022BackEISNN}, the value of Equation \ref{eq:U} increases under the true class label so that the difference among computed values of Equation \ref{eq:loss_L_theta} of samples with different labels becomes larger. Thus, $\mathrm{Var}[U_{\Tilde{\vartheta}}]$ increases.
     
    The neuronal input is given by external and recurrent input $I_{i}[t]=I^{ext}_{i}[t]+I^{rec}_{i}[t]$. The external inputs have excitatory ($e$) $K_{E}=P_{C} \mathrm{S}_{E}$ and inhibitory ($i$) $K_{I}=P_{C} \mathrm{S}_{I}$ with a probability $P_{C}$. The recurrent inputs are Poisson spike trains $I_{i}^{rec}[t]=\mu_{i}^{rec}+\sigma_{i}^{rec}\xi_{i}^{rec}$ \cite{DBLP:conf/icc/TorabK01}. $\mu_{e/i}=\mu_{rec,e/i}+f_{e/i}\mu_{ext}, \sigma^{2}_{e/i}=\sigma^{2}_{ext}+\sigma^{2}_{rec,e/i}, \beta_{e/i}=\frac{\sigma^{2}_{e/i}}{\mu_{e/i}}$ are inputs' mean, variance, variance-to-mean radio. $\sigma^{2}_{ext}$ is usually very small and $\beta_{e/i}$ is a constant irrespective to external inputs. The neural firing rate is $r_{e/i}=\frac{\mu_{e/i}}{V_{th}V_{0}}\propto \mu_{ext}$. It linearly encodes the external input mean, ensuring the network’s response to input changes very fast.
\end{proof}

\subsection{Curriculum for SNNs (\mname)}

\textsc{Classification model.} Direct implementation of the SNN model defined by Equation \ref{eq:vt} is not practical. We use an incremental way to update the PSP as indicated in Equation \ref{eq:SNN}. It can be derived from the spike response model in discrete time domain. Each neuron has a recurrent mode. $l,i,j$ are layer, neuron, input index. $N_{l}$ denotes the number of neurons in $l$-th layer. $I[t], R[t], O[t]$ are input current, reset voltage, neuron output. $\mathrm{H}(x)$ is a Heaviside step function: $\mathrm{H}(x)=0,\text{if } x<0, \text{otherwise } 1$. 
\begin{equation}\label{eq:SNN}
     \scriptsize
    \begin{aligned}
        &V_{i}^{l}[t]=I_{i}^{l}[t]-V_{th}R_{i}^{l}[t], \ \ I_{i}^{l}[t]=V_{0}\sum_{j}^{M_{l-1}}w_{i,j}^{l}(M_{i}^{l}[t]-H_{i}^{l}[t])\\
        &M_{i}^{l}[t]=\alpha N_{i}^{l}[t-1]+O_{j}^{l-1}[t], \ \ H_{i}^{l}[t]=\beta H_{i}^{l}[t-1]+O_{j}^{l-1}[t]\\
        &R_{i}^{l}[t]=\gamma R_{i}^{l}[t]+O_{i}^{l}[t-1], \ \ O_{i}^{l}[t]=\mathrm{H}(V_{i}^{l}[t]-V_{th})
    \end{aligned}
\end{equation}

To encode TS into spike sequences, we utilize a population of current-based integrate and fire neurons as encoder and pre-train the encoder using a neural engineering framework \cite{DBLP:conf/ijcnn/FangSQ20}; To train SNN, we employ Equation \ref{eq:loss} as the loss function and update SNN's parameters by employing the back-propagation through time and the gradient surrogate method (sigmoid) during training process \cite{DBLP:journals/ijns/KheradpishehM20}.

\textsc{Curriculum design.} Algorithm \ref{alg:CSNN} displays the proposed approach $(\mname=\mnameone+\mnametwo)$. For \mnameone, the scoring function is based on Equation \ref{eq:p}, the pacing function is the exponential pacing in Equation \ref{eq:fp}; For \mnametwo, there are $N_\mathrm{input}$ spiking neuron receivers and $M$ TS value intervals, every $\frac{N_{\mathrm{input}}}{M}$ neurons belong to a cluster $M<N_\mathrm{input}$. 
\begin{equation}\label{eq:fs} \small
     f_{s}(X_{i})= p_{i}=\frac{e^{-L(X_{i})}}{\sum_{i}^{N}e^{-L(X_{i})}}
\end{equation}
\begin{equation}\label{eq:fp} \small
    f_{p}(m)=\min(\mathrm{start\_percent}\cdot (1+\lfloor\frac{m}{\mathrm{step\_length}}\rfloor),1)\cdot N
\end{equation}

\begin{algorithm}[t]\caption{\mname} \label{alg:CSNN}
\begin{algorithmic}[1] 
\REQUIRE \textsc{// \mnameone Mechanism}
\STATE Get sores $\mathcal{P} \leftarrow$ Equation \ref{eq:fs}
\STATE Get the sorted TS dataset $\mathbb{R}$ by $\mathcal{P}$
\STATE Initialize mini-batches $\mathbb{B}\leftarrow [\ ]$
\FOR {$b = 1$ to $B$}
    \STATE Get size $|\mathcal{B}_{b}| \leftarrow$ Equation \ref{eq:fp}
    \STATE Get mini-batch $\mathcal{B}_{b} \leftarrow \mathbb{R}[(X_{i},C_{i})]_{i=1}^{|\mathcal{B}_{b}|}$
    \STATE $\mathbb{B}\leftarrow \mathcal{B}_{b}$
\ENDFOR
\REQUIRE \textsc{// \mnametwo Mechanism}
\STATE Construct $f_{e}$ with $M$ encoder clusters 
\FOR {$b = 1$ to $B$}
    \STATE Get spiking trains $\mathbb{I} \leftarrow f_{e}(\mathcal{B}_{b})$
    \STATE Train RSNN with $\mathbb{I}$
\ENDFOR

\end{algorithmic}
\end{algorithm}

\begin{table*}[!ht]
\scriptsize
\centerline{
\resizebox{\textwidth}{!}{
\begin{tabular}{l|ccccc|cccc|cc}
\toprule 
&$\tau_{m}(\mu,\sigma)$ &$\tau_{s}(\mu,\sigma)$ &$\tau(\mu,\sigma)$  &$a$ &bias &$sp$ &$ss$ &$\eta$ &$\eta$ decay(type) &$M$ &\tiny{$\frac{N_{\mathbf{input}}}{M}$} \\
\midrule
UCR &(20,5) &(150,10) &(20,5) &20 &0(fixed)  &5\% &50 &1e-2 &.5per10(step) &[5,10] &16\\
SEPSIS &(20,5) &(150,50) &(20,5) &20 &0(fixed) &10\% &2,000 &1e-2 &.5per20(step) &5 &32\\
COVID-19 &(20,5) &(150,50) &(20,5) &20 &0(fixed) &5\% &350 &1e-2 &.5per20(step)&5 &16\\
\bottomrule 
\end{tabular}}}\vspace{-0.2cm}
\centering
\caption{Hyper-parameter Setting of \mname} \label{tb:hyper-parameter}
\end{table*}

\begin{table*}[!ht]
\centerline{
\resizebox{\textwidth}{!}{
\begin{tabular}{lccccccccc}
\toprule 
& \textit{\# Classes} &RNN  &LSTM &1D-CNN &Transformer &SNN &RSNN &LSNN &\mname\\
\midrule 
Coffee &2   &.998±.001(4) &1.00±.002\textbf{(1)} &1.00±.002\textbf{(1)} &.998±.001(4)  &.986±.002(6)  &.986±.001(6)  &.986±.004(6) &1.00±.002\textbf{(1)} \\
GunPoint &2 &.987±.001(5) &1.00±.003\textbf{(1)} &1.00±.000\textbf{(1)} &1.00±.003\textbf{(1)}  &.986±.004(7)  &.987±.002(5)  &.986±.003(7) &1.00±.003\textbf{(1)} \\
MoteStrain &2 &.892±.008(3) &.895±.010\textbf{(1)} &.890±.010(5) &.891±.014(4) &.865±.015(7) &.869±.014(6) &.865±.016(7)   &.894±.011(2)\\
Computers &2 &.844±.010(4)  &.849±.011(2) &.851±.009\textbf{(1)} &.843±.012(5) &.803±.010(8) &.809±.012(6) &.806±.012(7) &.849±.006(2) \\
Wafer &2 &.999±.000(3) &.999±.000(3) &.999±.000(3) &1.00±.000\textbf{(1)} &.977±.001(6)  &.977±.001(6)  &.977±.001(6)   &1.00±.000\textbf{(1)}\\
Lightning&2  &.831±.006(3)  &.833±.006(2) &.830±.007(5) &.834±.008\textbf{(1)} &.806±.013(8) &.815±.010(7) &.816±.014(6) &.831±.006(3)\\
Yoga &2 &.875±.012(6) &.886±.011(3) &.886±.013(3) &.886±.011(3) &.875±.012(6) &.875±.016(6) &.888±.014(2)  &.901±.010\textbf{(1)} \\
CBF  &3 &1.00±.000\textbf{(1)} &1.00±.000\textbf{(1)} &1.00±.000\textbf{(1)} &.998±.001(5)  &.996±.002(6)  &.996±.002(6)  &.996±.003(6) &1.00±.009\textbf{(1)} \\
BME &3 &.997±.001(4) &.977±.001(4) &1.00±.000\textbf{(1)} &.999±.001(3)  &.976±.002(7)  &.977±.001(4)  &.976±.002(7) &1.00±.009\textbf{(1)} \\
Trace &4 &1.00±.000\textbf{(1)} &1.00±.000\textbf{(1)} &.999±.001(5) &1.00±.000\textbf{(1)} &.999±.001(5) &.999±.001(5) &.999±.001(5) &1.00±.000\textbf{(1)} \\
Oliveoil &4 &.899±.011(6) &.923±.009(3) &.923±.008(3) &.931±.012\textbf{(1)} &.868±.013(8) &.879±.013(7) &.906±.010(5)  &.924±.008(2)\\
Beef &5 &.835±.012(5) &.844±.013(4) &.848±.011(2)  &.845±.011(3)  &.805±.015(8) &.810±.012(6) &.808±.014(7)   &.849±.011\textbf{(1)} \\
Worms &5 &.709±.011(6) &.709±.012(6) &.723±.015(2) &.723±.010(2)  &.709±.014(6) &.710±.018(5) &.711±.012(4)  &.724±.015\textbf{(1)} \\
Symbols &6 &.954±.005(5) &.962±.006(2)  &.963±.004\textbf{(1)} &.957±.002(4) &.905±.009(8) &.915±.009(6) &.912±.007(7)  &.961±.003(3) \\
Synthetic &6 &1.00±.000\textbf{(1)} &1.00±.000\textbf{(1)} &1.00±.000\textbf{(1)} &.998±.001(5)  &.996±.001(6)  &.996±.002(6)  &.996±.001(6) &1.00±.009\textbf{(1)}  \\

\textit{Rank} & &3.6 &2.3 &2.3 &2.7  &6.8 &5.7 &6.2 &\textbf{1.5}\\
\midrule 
SEPSIS  &2 &.853±.015(7) &.855±.013(4) &.858±.015(2)  &.854±.011(6) &.829±.012(8) &.856±.009(3) &.855±.013(4)  &\textbf{.870±.010(1)}\\
\midrule 
COVID-19 &2  &.963±.013(6)  &.968±.013(2)  &.954±.013(5)  &.941±.012(8) &.942±.010(7)   &.955±.009(3)  &.955±.011(3)    &\textbf{.969±.008(1)}\\
\bottomrule 
\end{tabular}}}\vspace{-0.2cm}
\centering
\caption{Classification Accuracy $\uparrow$ and Performance Ranking $\downarrow$ of Methods} \label{tb:classification accuracy}
\end{table*}

\section{Experiments} \label{sec:experiments}

\subsection{Experimental Setup}
\textsc{Datasets.} The benchmark UCR archive \cite{UCRArchive2018} has univariate and regularly-sampled TS data, consisting of 128 TS datasets. We select 15 datasets covering multiple data types (spectro, sensor, simulated, motion) and classification tasks (binary-, three-, four-, five-, six-classification). Both two real-world healthcare datasets have multivariate and irregularly-sampled TS data: SEPSIS dataset \cite{DBLP:conf/cinc/ReynaJSJSWSNC19} has 30,336 records with 2,359 diagnosed sepsis. Each TS sample has 40 related patient features. Early diagnosis is critical to improving sepsis outcome \cite{seymour2017time}; COVID-19 dataset \cite{COVID-19} has 6,877 blood samples of 485 COVID-19 patients from Tongji Hospital, Wuhan, China. Each sample has 74 laboratory test features. Mortality prediction helps for timely treatment and allocation of medical resources \cite{DBLP:journals/BMC/sun}. 

\textsc{Baselines.} The SOTA ANNs include RNN, LSTM \cite{pmlr-v119-zhao20c}, 1D-CNN \cite{pmlr-v139-jiang21d}, and Transformer \cite{10.1145/3447548.3467401}. The SOTA SNNs include SNN \cite{DBLP:conf/ijcnn/FangSQ20}, RSNN \cite{SRNN}, and LSNN \cite{DBLP:conf/nips/BellecSSL018}. SNNs (SNN, RSNN, LSNN) are the spiking neuron versions of the classical ANNs (MLP, SNN, LSTM). The basic CTS model in \mname is RSNN.

\textsc{Hyper-parameters.} SNNs need the initialization of both the weight and the hyper-parameters of the spiking neurons (time constants, thresholds, starting potential). We initialize the starting value of the membrane potential $V_{i}^{l}[0]$ is initialized with a random value distributed uniformly in the range $[0,a+1.8\eta]$, $\eta$ is the learning rate; We randomly initialize the time constants $\tau_{m},\tau_{s},\tau$ following a tight normal distribution $\mu,\sigma$ with constant, uniform, and normal initializers; We test the pacing parameters in range $[2\%,20\%]$ for start percent $sp$ and range $[50, 2500]$ for step size $ss$; We test the internal number of \mnametwo in range $[5,10]$ and the neuron number of a cluster in $\{8,16,32,64\}$. Table \ref{tb:hyper-parameter} lists the final hyper-parameter setting. This setting also confirms that the assumptions $\frac{\mu}{2}<V_{th}<\mu$ we made in proving Theorem \ref{theo:app_snn1} often exist in practice. Figure \ref{fig:result2}(c) shows that the normal initializer achieves the best performance. \mnameone, \mnametwo, and the typical initializer make SNNs more active and less sparse during the initial training stage, allowing converging faster.

\begin{figure*}[t]
\setlength{\abovecaptionskip}{0cm}
\centerline{
\includegraphics[width=\linewidth]{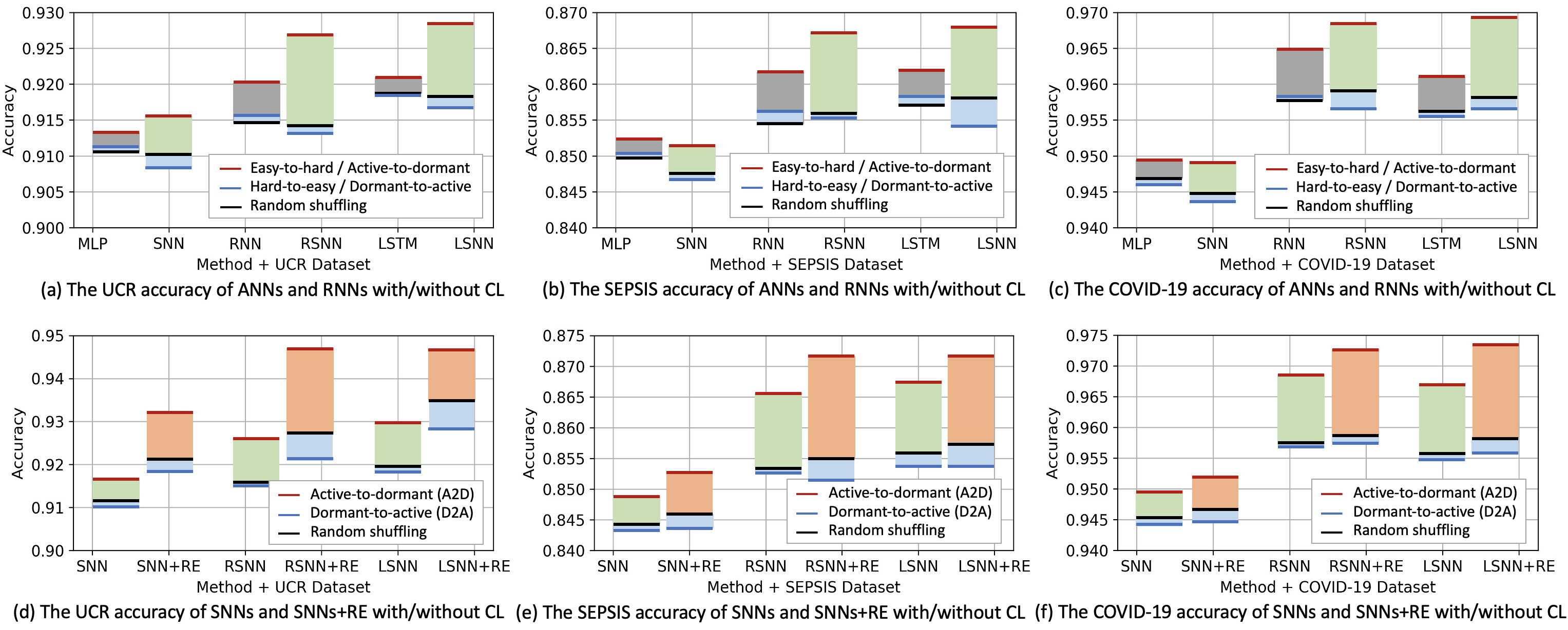}}\vspace{-0.3cm}
\caption{Changes in Classification Accuracy of Different ANNs and SNNs after Applying Curriculum Learning}
\label{fig:result1}
\end{figure*}

\subsection{Classification Accuracy}
% ok准确度
% ok不规则TS

\mname significantly improves the accuracy of SNNs for CTS task and makes SNNs (SNN, RSNN, LSNN) achieve performance comparable to deep ANNs (RNN, LSTM, 1D-CNN, Transformer). The 5-fold cross-validation method yields results that are presented as mean $\pm$ standard deviation. The classification accuracy is evaluated by Area Under Curve of Receiver Operating Characteristic (AUC-ROC, the higher the better) and its Confidence Interval (AUC-CI).

\mname has the highest classification accuracy on most datasets among all tested methods as shown in Table \ref{tb:classification accuracy}. It has a significant improvement in the average accuracy according to the Bonferroni-Dunn test with $\alpha=0.05$, $2.949\sqrt{\frac{7\times(7+1)}{6\times5\times3}}=2.62$ (critical difference) $<4.74$ (average rank of baselines). \mname can produce stable results as the 5-fold cross-validation results are all within the AUC-CI. For example, the accuracy of COVID-19 mortality classification ($0.971\pm0.008$) is in the AUC-CI ($0.971\pm 0.012$). 

SNNs are more suitable for modeling irregularly-sampled TS than ANNs. SNNs perform better than ANNs on SEPSIS and COVID-19 datasets. ANNs ignore the effect of uneven time intervals on value dependence, whereas SNNs have a decay mechanism over time for this issue: In inference, the model can be simulated in an event-driven manner, i.e. computation is only required when a spike event occurs. Thus, TS data is not required to have a uniform time interval as SNNs calculate based on time records. Suppose at $t$, $M[t]$ is known. In Equation \ref{eq:timedecay}, after $\Delta t$ unit time later, i.e. at time $t'=t+\Delta t$, $M[t]$ decays over time without input spike; $M[t]$ has an instantaneous unit charge with an input spike. And the similar update rule can also apply for states $H[t]$ and $R[t]$ with $H[t']=H[t]e^{\frac{-\Delta t}{\tau_{s}}}, R[t']=H[t]e^{\frac{-\Delta t}{\tau}}$.
\begin{equation}  \label{eq:timedecay}
\small
\begin{aligned}
    &\mathrm{No\ input\  spike:}\ \ M[t']=\sum_{t_{i}<t}^{t_{i}}e^{-\frac{t+\Delta t-t_{i}}{\tau_{m}}}=M[t]e^{\frac{-\Delta t}{\tau_{m}}}    \\
    & \mathrm{An\ input\ spike:}\ \ M[t']=M[t]e^{\frac{-\Delta t}{\tau_{m}}}+1
\end{aligned}   
\end{equation}

\begin{figure*}[t]
\setlength{\abovecaptionskip}{0cm}
\centerline{
\includegraphics[width=\linewidth]{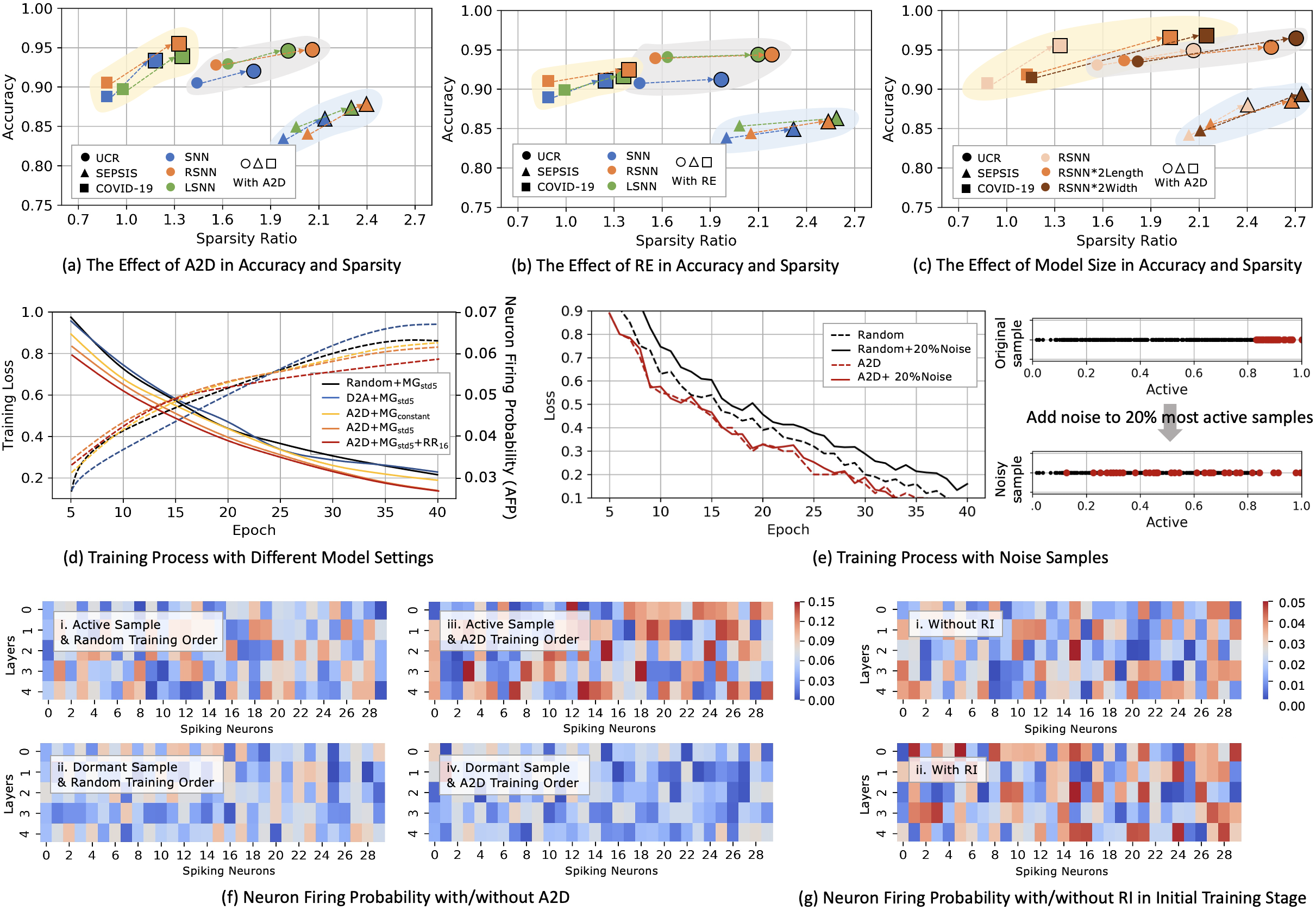}}\vspace{-0.3cm}
\caption{Improvements in Model Sparsity, Firing Statutes, Convergence Process, and Anti-noise Performance after Using \mname}
\label{fig:result2}
\end{figure*}

\subsection{Curriculum Learning Performance}

\subsubsection*{The power of active-to-dormant training order mechanism (\mnameone)}
% ok CL更适合SNN
% ok 不同顺序导致不同的脉冲输出，尤其在训练初期明显，这也会影响训练收敛速度
% ok 宽度对稀疏度的影响更大 CL在更宽的网络上提升更多
% ok 抗噪的表现
% ok 对不同样本激活次数的统计：一个简单样本和一个困难样本在不同顺序下激活的表现

The orderly training has a greater impact on SNNs than ANNs with about twice the accuracy change. And \mname can improve the classification accuracy, network sparsity, firing status, and anti-noise ability of SNNs. In this experiment, we apply our \mnameone to SNNs and the easy-to-hard training order (E2H) \cite{DBLP:conf/icml/HacohenW19} to ANNs. The dormant-to-active training order (D2A) and the hard-to-easy training order (H2E) are their anti-curriculum. 

The CTS accuracy of SNNs using \mnameone is improved more than that of ANNs using E2H (The green rectangle is bigger than the gray rectangle) as shown in Figure \ref{fig:result1}(a)-(c). Meanwhile, the anti-curriculum will negatively affect SNNs (The blue line is below the black line), but sometimes positively affect ANNs (The blue line is above the black line). This demonstrates that SNNs are more impacted by CL than ANNs are (The distance between the red line and the blue line of SNNs is farther than that of ANNs).

SNNs communicate sparingly. \mnameone can adjust the firing status and network sparsity of the trained SNNs when classify TS data as shown in Figure \ref{fig:result2}(a). The sparsity ratio represents the proportion of neurons that have not been fired in total spiking neurons. After using \mnameone, both the CTS accuracy and the sparsity ratio are increased, where RSNN's sparsity changes the most among the tested SNNs. And the sparsity of SNNs with wider structure is more affected than that of SNNs with wider structure as shown in Figure \ref{fig:result2}(c). This finding opens up more opportunities for using optimization techniques like pruning and quantization, emphasizing the benefits of being lightweight even more.

From the standpoint of the model change, \mnameone improves the CTS accuracy and accelerates the model convergence by changing the neuron activation state of SNNs in the model training process. During training, the neural activity is expressed as the average firing probability per timestep per neuron (AFP). Most SNNs exhibit less than $0.10$ AFP. A2D makes the firing probability higher in the early training stage and lower in the late training stage as shown in Figure \ref{fig:result2}(d), meaning the model will be more activated at the start of training. This accelerates the network convergence: A2D improves the accuracy over random training order with the same number of training epochs. While this is happening, wider networks make the phenomenon more obvious.

From the standpoint of the sample activity, \mnameone improves CTS accuracy by increasing neuron firing probability when inputting the active samples and decreases that when inputting the dormant samples compared with the random training order as shown in Figure \ref{fig:result2}(f). On the one hand, it can increase the network's initial convergence speed and, on the other hand, can lessen the network's exposure to noise data by treating noise data as dormant samples.

\mnameone gives SRNN a specific anti-noise ability as shown in Figure \ref{fig:result2}(e). We add Gaussian noise with a signal-to-noise ratio of $20\mathrm{db}$ to the most $20\%$ active TS samples in \mnameone. After recalculating the activity, these noisy samples become less active and the priority of participating in training is reduced. During training process, the impact of noise on SRNN's loss is significantly reduced after using \mnameone.

\subsubsection*{The power of value-based regional encoding mechanism (\mnametwo)}
% okCL对有RR的SNN提升更多
% ok对多元时序数据的处理
% ok可视化神经元激活的概率

The CTS accuracy of SNNs with \mnametwo is more accurate than that of SNNs without \mnametwo (The black lines of SNN+\mnametwo, RSNN+\mnametwo, and LSNN+\mnametwo are above the black lines of SNN, RSNN, and LSNN) as shown in Figure \ref{fig:result1}(d)-(f). Meanwhile, when using \mnametwo, \mnameone will affect SNNs more positively (The orange rectangle of SNNs+\mnametwo is bigger than green rectangle of SNNs), and anti-curriculum D2A will affect SNNs less negatively (The blue rectangle of SNNs+\mnametwo is smaller than the blue rectangle of SNNs).

\mnametwo can increase the sparsity ratio of SNNs more than \mnameone, although it does not enhance accuracy as much, shown in Figure \ref{fig:result2}(b). \mnametwo can also accelerate model convergence by increasing the neuron activity in the initial model training stage as shown in Figure \ref{fig:result2}(g).

\mnametwo works better with univariate TS than it does with multivariate TS as shown in Figure \ref{fig:result1}(d)-(f). For example, compared to the multivariate SEPSIS and COVID-19 datasets, the accuracy improvement in the univariate UCR dataset is greater (The lifting between the black line of SNNs+\mnametwo and that of SNNs in (d) is greater than that in (e) and (f)). This might be because the region division criterion for multivariate TS is based on the mean value of all univariate, which may result in a lesser distinction between regions.

\section{Conclusion}

This paper investigates the power of curriculum learning (CL) on spiking neural networks (SNNs) for the classification of time series (CTS) for the first time. We design a curriculum for SNNs (\mname) by proposing an active-to-dormant training order mechanism (\mnameone) and a value-based regional encoding mechanism (\mnametwo). Through the theoretical analysis and experimental confirmation, we reach the following results: The constructed SNNs have a greater potential for modeling TS data than ANNs. Because each neuron is a recursive system, it can represent the real-world irregularly-sampled TS without the need for any additional mechanisms; Compared to ANNs, CL affects SNNs and wider SNN structures more positively, whereas the anti-curriculum may affect SNNs negatively; The designed SNNs' curriculum \mname can simulate the order of human knowledge acquisition and how the brain processes sequential input. It is appropriate for the properties of spiking neurons. By adjusting the spiking neuron firing statutes and activities, \mname can improve the CTS accuracy, model convergence speed, network sparsity, and anti-noise ability of SNNs.

% Acknowledgements should only appear in the accepted version.
% \section*{Acknowledgements}
% \textbf{Do not} include acknowledgements in the initial version of
% the paper submitted for blind review.

\newpage

\bibliography{references}
\bibliographystyle{icml2022}

%%%%%%%%%%%%%%%%%%%%%%%%%%%%%%%%%%%%%%%%%%%%%%%%%%%%%%%%%%%%%%%%%%%%%%%%%%%%%%%
%%%%%%%%%%%%%%%%%%%%%%%%%%%%%%%%%%%%%%%%%%%%%%%%%%%%%%%%%%%%%%%%%%%%%%%%%%%%%%%
% APPENDIX
%%%%%%%%%%%%%%%%%%%%%%%%%%%%%%%%%%%%%%%%%%%%%%%%%%%%%%%%%%%%%%%%%%%%%%%%%%%%%%%
%%%%%%%%%%%%%%%%%%%%%%%%%%%%%%%%%%%%%%%%%%%%%%%%%%%%%%%%%%%%%%%%%%%%%%%%%%%%%%%
\newpage
\appendix
\onecolumn

\setcounter{theorem}{0} 
\setcounter{proposition}{0} 

\section{Theorems, Propositions, and Proofs } \label{sec:appendix}

\begin{theorem} \label{theo:app_snn1}
In SNN, different input orders of spike trains make the neuron output different spike trains.
\end{theorem}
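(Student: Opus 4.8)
This statement is an existence (``may'') claim, so the plan is to exhibit a single explicit witness rather than reason about all configurations. I would instantiate the smallest nontrivial setting: one LIF neuron in the first layer whose membrane potential evolves by Equation~\ref{eq:vt}, fed the two fixed inputs $S_0=(1)^l$ and $S_1=(0)^{l/2}\wedge(1)^{l/2}$ under the two presentation orders $S_0\to S_1$ and $S_1\to S_0$. The key observation is that both terms of Equation~\ref{eq:vt} carry memory: the PSP term of Equation~\ref{eq:PSP} sums over all past input spikes with exponential decay, and the reset term sums over all past output spikes. Hence $v(t)$ is path-dependent, and the neuron's response within the second block genuinely differs according to which sequence was presented first. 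This path-dependence is the whole reason order can matter, and isolating it is the conceptual core of the argument.

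First I would fix the weight regime $\tfrac{\mu}{2}<V_{th}<\mu$ coming from the Gaussian initialization $\mathcal{N}(\mu,\sigma^2)$. This calibration is the crux: it places firing at the borderline, so that an accumulation of unit PSPs from a run of ones is decisive while a starved potential left by a run of zeros is not. Concretely, the all-ones prefix of $S_0$ supplies enough accumulated, weighted PSP to push $v(t)$ past $V_{th}$ promptly, whereas the leading zeros of $S_1$ supply no input current and leave $v(t)$ below $V_{th}$ at the corresponding timestep.

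Next I would track $v(t)$ under each ordering. For $S_0\to S_1$, the all-ones block drives $v$ above $V_{th}$ within the first timesteps, yielding a leading spike and output $O_0=(1,1)$; the reset subtraction $-V_{th}$ and the continuous decay of earlier PSPs are bookkept through the kernel $K(t)=V_0(e^{-t/\tau_m}-e^{-t/\tau_s})$. For $S_1\to S_0$, the zero prefix keeps $v$ below $V_{th}$ at the first decision timestep, so the output opens with a miss and only fires once the ones arrive, yielding $O_1=(0,1)$. Comparing the two gives $O_0\neq O_1$, which is exactly the assertion that the input order changes the emitted spike train.

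The main obstacle is that $v(t)$ is not a plain running sum: the double-exponential kernel makes each past PSP decay continuously, and the reset term subtracts $V_{th}$ at every output spike, so the threshold-crossing times depend delicately on $\tau_m,\tau_s,\tau$ and on the residual context inherited from the first block. I would handle this not by exact evaluation but by bounding the accumulated weighted PSP from below in the firing case and from above in the non-firing case, so that the two-sided window $\tfrac{\mu}{2}<V_{th}<\mu$ forces the desired crossing pattern. Since the weights are random, the whole event holds with positive probability under the Gaussian initialization — and for an existence (``may'') claim, one such realization is enough to establish the theorem.
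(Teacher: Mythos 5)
There is a genuine gap: you have missed the mechanism the paper's proof actually relies on, namely the \emph{weight update between sample presentations}. In the paper's argument the two orders are training orders: when $S_{0}$ is presented first, the neuron fires and the parameters are updated, $w\rightarrow w'$ with new mean $\mu'$, and it is precisely the condition $\frac{\mu'}{2}>V_{th}$ on the \emph{updated} weights that makes the subsequent half-ones input $S_{1}$ fire, giving $O_{0}=(1,1)$. In the reverse order $S_{1}$ meets the \emph{original} weights, yields $v\approx\frac{\mu}{2}<V_{th}$, produces no spike and hence no update, so $O_{1}=(0,1)$. Your proposal instead freezes the network and attributes the order-dependence to path-dependence of $v(t)$ within a single concatenated trajectory (residual PSPs of Equation \ref{eq:PSP} plus the reset term of Equation \ref{eq:vt}). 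That mechanism does not deliver your claimed witness under the stated hypotheses: after the first block fires, the reset subtraction and the leak across the $\frac{l}{2}$ zero steps \emph{reduce} the residual potential, and the half-ones block alone contributes only $v\approx\frac{\mu}{2}<V_{th}$ by the very window $\frac{\mu}{2}<V_{th}<\mu$ you invoke. So without additional, unstated assumptions on $\tau_{m},\tau_{s},\tau$ and $l$ (e.g., negligible leak over $\frac{l}{2}$ steps and a reset that decays away first), the second block need not fire, and your promised lower bound in the firing case has nothing to stand on.

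The fallback your argument silently permits --- outputs $(1,0)$ versus $(0,1)$ from a frozen network --- satisfies only a trivial reading of Theorem \ref{theo:app_snn1}: permuting the inputs of any non-constant map permutes the outputs. The substantive content, which the paper's subsequent discussion depends on (``in $S_{1}\rightarrow S_{0}$, $S_{1}$ does not make the neuron fire so that the SNN is not updated''), is that the response to the \emph{same} sample $S_{1}$ changes with the order, and that happens only because firing on $S_{0}$ triggers plasticity. To repair your proof, replace the residual-dynamics bookkeeping with the update step: show that presenting $S_{0}$ first drives $v\approx\mu>V_{th}$, that the resulting gradient step increases the weights so that $\frac{\mu'}{2}>V_{th}$ is achievable, and then read off $O_{0}=(1,1)\neq O_{1}=(0,1)$ exactly as the paper does.
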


\begin{proof}
We assume two input spike trains with the same spike number and equal time intervals, most simply, an $l$-length all-one spike sequence $S_{0}=(1)^{l}$, and an $l$-length half-one spike sequence $S_{1}=(0)^{\frac{l}{2}}\land (1)^{\frac{l}{2}}$. When the SNN training starts, the parameters $w$ are initialized randomly under a Gaussian distribution $\mathcal{N}(\mu,\sigma^2)$. We focus on the membrane potential $v$ of one neuron of the first layer and assume $\frac{\mu}{2}<V_{th}<\mu$.

If the learning order is $S_{0} \rightarrow S_{1}$: After inputting $S_{0}$, the membrane potential is $v=\sum_{i=1}^{l} w\mathrm{PSP}\approx\mu>V_{th}$, then the neuron fires and $w\rightarrow w'$; After inputting $S_{1}$, $v\approx \frac{\mu'}{2}$.

If the learning order is $S_{1} \rightarrow S_{0}$: After inputting $S_{1}$, $v\approx \frac{\mu}{2}<V_{th}$, then the neuron does not fire and $w$ remains; After inputting $S_{0}$, $v\approx \mu >V_{th}$, then the neuron fires.

When $\frac{\mu'}{2}>V_{th}$, there will be two fires in the first case but one in the second case $O_{0}=(1,1)\neq O_{1}=(0,1)$.
\end{proof}

\begin{theorem} \label{theo:app_cl1}
The difference between objectives $\mathcal{U}_{p}$ and $\mathcal{U}$, which are computed with and without curriculum prior $p$, is the covariance between $U_{\vartheta}$ and $p$.
\begin{equation} 
\small
    \mathcal{U}_{p}(\vartheta)=\mathcal{U}(\vartheta)+\hat{\mathrm{Cov}}[U_{\vartheta},p]
\nonumber
\end{equation} 
\end{theorem}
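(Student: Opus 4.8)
The plan is to read the claim as the empirical version of the elementary covariance decomposition $\mathrm{Cov}[A,B]=\mathbb{E}[AB]-\mathbb{E}[A]\,\mathbb{E}[B]$, instantiated with $A=U_{\vartheta}$ and $B=p$. Starting from the $p$-weighted objective in Equation \ref{eq:loss_U_theta}, I would rewrite its summand so that the empirical covariance of $U_{\vartheta}$ and the prior $p$ appears explicitly, and then invoke the normalization of $p$ to show that the leftover product-of-means term is precisely the uniform objective $\mathcal{U}$ of Equation \ref{eq:loss_L_theta}. The whole argument is a pure algebraic identity, so no estimates or limiting arguments are required.

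Concretely, I would begin with $\mathcal{U}_{p}(\vartheta)=\sum_{i=1}^{N}U_{\vartheta}(X_{i})\,p_{i}$ and add and subtract the product of the empirical means, writing
\[
\mathcal{U}_{p}(\vartheta)=\Big(\sum_{i=1}^{N}U_{\vartheta}(X_{i})\,p_{i}-N\,\hat{\mathbb{E}}[U_{\vartheta}]\,\hat{\mathbb{E}}[p]\Big)+N\,\hat{\mathbb{E}}[U_{\vartheta}]\,\hat{\mathbb{E}}[p].
\]
The bracketed expression is exactly the (un-normalized) empirical covariance, since expanding $\sum_{i}\big(U_{\vartheta}(X_{i})-\hat{\mathbb{E}}[U_{\vartheta}]\big)\big(p_{i}-\hat{\mathbb{E}}[p]\big)$ and using $\sum_{i}U_{\vartheta}(X_{i})=N\hat{\mathbb{E}}[U_{\vartheta}]$ together with $\sum_{i}p_{i}=N\hat{\mathbb{E}}[p]$ cancels the two linear cross terms and leaves $\sum_{i}U_{\vartheta}(X_{i})p_{i}-N\hat{\mathbb{E}}[U_{\vartheta}]\hat{\mathbb{E}}[p]$. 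Hence the first group equals $\hat{\mathrm{Cov}}[U_{\vartheta},p]$, and it remains only to identify the residual term $N\hat{\mathbb{E}}[U_{\vartheta}]\hat{\mathbb{E}}[p]$ with $\mathcal{U}(\vartheta)$.

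This final identification is where the structure of the curriculum prior enters. Because $p$ is a sampling distribution over the $N$ samples, $\sum_{i}p_{i}=1$, so $\hat{\mathbb{E}}[p]=\tfrac{1}{N}\sum_{i}p_{i}=\tfrac{1}{N}$ and therefore $N\,\hat{\mathbb{E}}[U_{\vartheta}]\,\hat{\mathbb{E}}[p]=\hat{\mathbb{E}}[U_{\vartheta}]=\mathcal{U}(\vartheta)$, the last equality being the definition in Equation \ref{eq:loss_L_theta}. Substituting back yields $\mathcal{U}_{p}(\vartheta)=\mathcal{U}(\vartheta)+\hat{\mathrm{Cov}}[U_{\vartheta},p]$, as claimed.

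I do not expect an analytic obstacle, since the statement is an exact identity rather than a bound or an asymptotic claim. The one place that genuinely requires care is the normalization bookkeeping: one must fix compatible conventions for $\hat{\mathbb{E}}$ (here the sample average) and for $\hat{\mathrm{Cov}}$ (here taken without the $1/N$ prefactor) so that the residual term reduces to $\mathcal{U}(\vartheta)$ exactly, and it is precisely the property $\sum_{i}p_{i}=1$ of the prior that supplies this collapse. Had one instead used the more standard $1/N$-normalized covariance, the same computation would produce $\mathcal{U}_{p}=\mathcal{U}+N\,\hat{\mathrm{Cov}}[U_{\vartheta},p]$; matching the stated form is therefore entirely a matter of choosing definitions consistent with the prior's normalization.
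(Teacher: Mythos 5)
Your proof is correct and takes essentially the same route as the paper's: both add and subtract the product of empirical means in $\sum_{i}U_{\vartheta}(X_{i})p_{i}$, recognize the un-normalized empirical covariance $\hat{\mathrm{Cov}}[U_{\vartheta},p]$, and identify the leftover term with $\mathcal{U}(\vartheta)$ via the normalization $\sum_{i}p_{i}=1$. Your explicit bookkeeping of the conventions is in fact cleaner than the paper's own derivation, which writes $\hat{\mathbb{E}}[U_{\vartheta}]=\sum_{i}U_{\vartheta}(X_{i})$ in Equation \ref{eq:loss_L_theta} while implicitly using the sample-average convention in the final identification (and carries a stray factor of $N$ in its intermediate regrouping), precisely the pitfall you flag in your closing paragraph.
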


\begin{proof}
By deforming the objective $\mathcal{U}_{p}(\vartheta)=\hat{\mathbb{E}}_{p}[U_{\vartheta}]=\sum_{i=1}^{N}U_{\vartheta}(X_{i})p(X_{i})$, we can find that $\mathcal{U}_{p}(\vartheta)$ is determined by the correlation between $U_{\vartheta}(X)$ and $p(X)$.
\begin{equation} 
\small
\begin{aligned}
     \mathcal{U}_{p}(\vartheta)=&\sum_{i=1}^{N}U_{\vartheta}(X_{i})p(X_{i})\\
     =&\sum_{i=1}^{N}U_{\vartheta}(X_{i})p(X_{i})-2N\hat{\mathbb{E}}(U_{\vartheta})\hat{\mathbb{E}}(p)+2N\hat{\mathbb{E}}(U_{\vartheta})\hat{\mathbb{E}}(p)\\
     =&(\sum_{i=1}^{N}U_{\vartheta}(X_{i})p(X_{i})-N\hat{\mathbb{E}}(p)\sum_{i=1}^{N}U_{\vartheta}(X_{i})\\&-N\hat{\mathbb{E}}(U_{\vartheta})\sum_{i=1}^{N}p(X_{i})+N\hat{\mathbb{E}}(U_{\vartheta})\hat{\mathbb{E}}(p))+N\hat{\mathbb{E}}(U_{\vartheta})\hat{\mathbb{E}}(p)\\
     =&\sum_{i=1}^{N}(U_{\vartheta}(X_{i})-\hat{\mathbb{E}}[U_{\vartheta}])(p_{i}-\hat{\mathbb{E}}[p])+N\hat{\mathbb{E}}(U_{\vartheta})\hat{\mathbb{E}}(p)\\
    =&\hat{\mathrm{Cov}}[U_{\vartheta},p]+\mathcal{U}(\vartheta)
\end{aligned}
\nonumber
\end{equation}
\end{proof}

\begin{theorem} \label{theo:app_cl2}
Curriculum learning changes the optimization function from $\mathcal{U}(\vartheta)$ to $\mathcal{U}_{p}(\vartheta)$. The optimal $\Tilde{\vartheta}$ maximizes the covariance between $p$ and $U_{\Tilde{\vartheta}}$:
\begin{equation}
\small
    \Tilde{\vartheta}=\arg\max\limits_{\vartheta}\mathcal{U}(\vartheta)=\arg\max\limits_{\vartheta}\hat{\mathrm{Cov}}[U_{\vartheta},p] 
\nonumber
\end{equation}
and satisfying two claims:
\begin{equation}
\small
    \begin{aligned}
        &\Tilde{\vartheta}=\arg\max\limits_{\vartheta}\mathcal{U}(\vartheta)=\arg\max\limits_{\vartheta}\mathcal{U}_{p}(\vartheta)\\
        &\forall\vartheta \quad \mathcal{U}_{p}(\Tilde{\vartheta})-\mathcal{U}_{p}(\vartheta)\geq \mathcal{U}(\Tilde{\vartheta})-\mathcal{U}(\vartheta)
    \end{aligned} 
\nonumber
\end{equation}
\end{theorem}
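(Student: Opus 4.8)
The plan is to exploit the covariance decomposition from Theorem \ref{theo:app_cl1} together with the explicit form of the curriculum prior $p$ given in Equation \ref{eq:p}. First I would recall that Theorem \ref{theo:app_cl1} gives $\mathcal{U}_{p}(\vartheta)=\mathcal{U}(\vartheta)+\hat{\mathrm{Cov}}[U_{\vartheta},p]$ for every $\vartheta$. Since the prior is tied to the optimal hypothesis by $p_{i}=U_{\Tilde{\vartheta}}(X_{i})/P$, I would substitute this to rewrite the correction term as $\hat{\mathrm{Cov}}[U_{\vartheta},p]=\frac{1}{P}\hat{\mathrm{Cov}}[U_{\vartheta},U_{\Tilde{\vartheta}}]$, turning the statement into a claim about how the cross-covariance of $U_{\vartheta}$ with the optimal $U_{\Tilde{\vartheta}}$ behaves as $\vartheta$ varies.

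The central step is to show $\Tilde{\vartheta}=\arg\max_{\vartheta}\hat{\mathrm{Cov}}[U_{\vartheta},p]$. I would apply the Cauchy--Schwarz inequality to the empirical covariance, obtaining $\hat{\mathrm{Cov}}[U_{\vartheta},U_{\Tilde{\vartheta}}]\leq\sqrt{\mathrm{Var}[U_{\vartheta}]\,\mathrm{Var}[U_{\Tilde{\vartheta}}]}$, with equality attained at $\vartheta=\Tilde{\vartheta}$, where the cross-covariance collapses to $\mathrm{Var}[U_{\Tilde{\vartheta}}]$. Under the mild regularity assumption (made precise in Proposition \ref{prop:1}) that $\mathrm{Var}[U_{\vartheta}]$ is roughly constant over the plausible range of parameters, the upper bound equals $\mathrm{Var}[U_{\Tilde{\vartheta}}]$, so the maximum of the cross-covariance is achieved exactly at $\Tilde{\vartheta}$. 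This identifies $\Tilde{\vartheta}$ as the maximizer of $\hat{\mathrm{Cov}}[U_{\vartheta},p]$, and since $\Tilde{\vartheta}$ is by definition the maximizer of $\mathcal{U}(\vartheta)$, it gives the first assertion.

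With the covariance-maximization in hand, the two displayed claims follow by elementary algebra. For the first, $\mathcal{U}_{p}=\mathcal{U}+\hat{\mathrm{Cov}}[U_{\vartheta},p]$ is a sum of two functions both maximized at $\Tilde{\vartheta}$, so $\Tilde{\vartheta}$ maximizes $\mathcal{U}_{p}$ as well. For the second, I would chain the Theorem \ref{theo:app_cl1} decomposition twice to write $\mathcal{U}_{p}(\Tilde{\vartheta})-\mathcal{U}_{p}(\vartheta)=[\mathcal{U}(\Tilde{\vartheta})-\mathcal{U}(\vartheta)]+[\hat{\mathrm{Cov}}[U_{\Tilde{\vartheta}},p]-\hat{\mathrm{Cov}}[U_{\vartheta},p]]$, and the bracketed covariance difference is nonnegative precisely because $\Tilde{\vartheta}$ maximizes the covariance, delivering $\mathcal{U}_{p}(\Tilde{\vartheta})-\mathcal{U}_{p}(\vartheta)\geq\mathcal{U}(\Tilde{\vartheta})-\mathcal{U}(\vartheta)$.

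The main obstacle is the covariance-maximization step, which is not unconditionally true: the Cauchy--Schwarz bound only pins the maximizer to $\Tilde{\vartheta}$ once the factor $\mathrm{Var}[U_{\vartheta}]$ is controlled. I therefore expect the real work to lie in justifying the constant-variance hypothesis, which is exactly the condition isolated in Proposition \ref{prop:1}; everything downstream is bookkeeping with the Theorem \ref{theo:app_cl1} identity.
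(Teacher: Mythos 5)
Your proposal is correct, and for claim 2 it is algebraically identical to the paper's own proof: the paper writes $\mathcal{U}_{p}(\Tilde{\vartheta})-\mathcal{U}_{p}(\vartheta)=\mathcal{U}_{p}(\Tilde{\vartheta})-\mathcal{U}(\vartheta)-\hat{\mathrm{Cov}}[U_{\vartheta},p]\geq\mathcal{U}_{p}(\Tilde{\vartheta})-\mathcal{U}(\vartheta)-\hat{\mathrm{Cov}}[U_{\Tilde{\vartheta}},p]=\mathcal{U}(\Tilde{\vartheta})-\mathcal{U}(\vartheta)$, which is exactly your ``chain the decomposition twice'' step written as an inequality chain. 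Where you genuinely diverge is in the treatment of the premise $\Tilde{\vartheta}=\arg\max_{\vartheta}\hat{\mathrm{Cov}}[U_{\vartheta},p]$. The paper's proof of Theorem~\ref{theo:app_cl2} simply \emph{uses} this as given --- it is asserted in the statement as the defining property of an ideal curriculum prior, never derived inside the proof, and its justification for the concrete prior of Equation~\ref{eq:p} is deferred to Proposition~\ref{prop:app_1}. You instead pull Proposition~\ref{prop:app_1}'s machinery forward into the theorem itself: substituting $p_{i}=U_{\Tilde{\vartheta}}(X_{i})/P$ to get $\hat{\mathrm{Cov}}[U_{\vartheta},p]=\frac{1}{P}\hat{\mathrm{Cov}}[U_{\vartheta},U_{\Tilde{\vartheta}}]$, then Cauchy--Schwarz plus the constant-variance hypothesis $\mathrm{Var}[U_{\vartheta}]\equiv b$ to pin the maximizer of the cross-covariance at $\Tilde{\vartheta}$. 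This makes your proof self-contained --- it actually proves the argmax identity the paper only asserts, and you also handle claim 1, which the paper's proof skips entirely (it opens with ``For claim 2'' and says nothing about claim 1) --- but it buys this at the cost of narrowing the theorem to the specific prior of Equation~\ref{eq:p} and importing an assumption not present in the theorem's statement; the paper's modular arrangement keeps Theorem~\ref{theo:app_cl2} prior-agnostic and isolates the extra hypothesis in Proposition~\ref{prop:app_1}. Your closing caveat is well placed: the covariance-maximization is indeed not unconditionally true, so the paper's theorem as stated holds only under exactly the conditions you identify, and your reorganization arguably fills a gap in the paper's exposition rather than introducing one. One small simplification: claim 1 follows immediately from claim 2, since claim 2 together with $\mathcal{U}(\Tilde{\vartheta})\geq\mathcal{U}(\vartheta)$ gives $\mathcal{U}_{p}(\Tilde{\vartheta})\geq\mathcal{U}_{p}(\vartheta)$ for all $\vartheta$; your ``sum of two functions both maximized at $\Tilde{\vartheta}$'' observation, while fine, is not needed.
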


\begin{proof} For claim 2, from Theorem \ref{theo:app_cl1},
    \begin{equation}
    \small
    \begin{aligned}
        \mathcal{U}_{p}(\Tilde{\vartheta})-\mathcal{U}_{p}(\vartheta)&=\mathcal{U}_{p}(\Tilde{\vartheta})-\mathcal{U}(\vartheta)-\hat{\mathrm{Cov}}[U_{\vartheta},p]\\
        &\geq \mathcal{U}_{p}(\Tilde{\vartheta})-\mathcal{U}(\vartheta)-\hat{\mathrm{Cov}}[U_{\Tilde{\vartheta}},p]\\
        &=\mathcal{U}(\Tilde{\vartheta})-\mathcal{U}(\vartheta)
    \end{aligned}  
\nonumber
    \end{equation}
\end{proof}

\begin{proposition}\label{prop:app_1}
    When using the active-to-dormant training order, Theorem \ref{theo:app_cl2} holds if the variance of the maximum function is roughly constant in the relevant range of plausible parameter values.
\begin{equation}\small
\begin{aligned}
    \mathcal{U}_{p}(\Tilde{\vartheta})-\mathcal{U}_{p}(\vartheta)\geq \mathcal{U}(\Tilde{\vartheta})-\mathcal{U}(\vartheta)\quad \forall\vartheta: \mathrm{Cov}[U_{\vartheta},U_{\Tilde{\vartheta}}]\leq\mathrm{Var}[U_{\Tilde{\vartheta}}]
\end{aligned}\nonumber
\end{equation}
\end{proposition}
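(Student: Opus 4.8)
The plan is to leverage the defining property of the active-to-dormant prior, namely that $p$ is proportional to the optimal maximum function: by Equation~\ref{eq:p} we have $p_{i}=U_{\Tilde{\vartheta}}(X_{i})/P$ with $P=\sum_{i}U_{\Tilde{\vartheta}}(X_{i})>0$. Substituting this into the identity of Theorem~\ref{theo:app_cl1} turns the covariance with the prior into a covariance between two maximum functions, giving $\mathcal{U}_{p}(\vartheta)=\mathcal{U}(\vartheta)+\frac{1}{P}\mathrm{Cov}[U_{\vartheta},U_{\Tilde{\vartheta}}]$. This is the single structural fact that makes the whole argument work, and everything else is bookkeeping.

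First I would specialize this identity at the two points of interest. At the optimum $\Tilde{\vartheta}$ the covariance collapses to a variance, so $\mathcal{U}_{p}(\Tilde{\vartheta})=\mathcal{U}(\Tilde{\vartheta})+\frac{1}{P}\mathrm{Var}[U_{\Tilde{\vartheta}}]$, while at an arbitrary $\vartheta$ the identity reads $\mathcal{U}_{p}(\vartheta)=\mathcal{U}(\vartheta)+\frac{1}{P}\mathrm{Cov}[U_{\vartheta},U_{\Tilde{\vartheta}}]$. Subtracting the second from the first yields
\begin{equation}
\small
\mathcal{U}_{p}(\Tilde{\vartheta})-\mathcal{U}_{p}(\vartheta)=\big(\mathcal{U}(\Tilde{\vartheta})-\mathcal{U}(\vartheta)\big)+\frac{1}{P}\big(\mathrm{Var}[U_{\Tilde{\vartheta}}]-\mathrm{Cov}[U_{\vartheta},U_{\Tilde{\vartheta}}]\big).
\nonumber
\end{equation}
Since $P>0$, the desired inequality $\mathcal{U}_{p}(\Tilde{\vartheta})-\mathcal{U}_{p}(\vartheta)\geq\mathcal{U}(\Tilde{\vartheta})-\mathcal{U}(\vartheta)$ is then equivalent to the nonnegativity of the second bracket, i.e.\ exactly the stated hypothesis $\mathrm{Cov}[U_{\vartheta},U_{\Tilde{\vartheta}}]\leq\mathrm{Var}[U_{\Tilde{\vartheta}}]$. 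So the condition under which Theorem~\ref{theo:app_cl2} (claim~2) holds drops out of the algebra with no further work.

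The remaining, and genuinely delicate, step is to justify \emph{why} the hypothesis $\mathrm{Cov}[U_{\vartheta},U_{\Tilde{\vartheta}}]\leq\mathrm{Var}[U_{\Tilde{\vartheta}}]$ is reasonable, which is where the ``variance roughly constant'' assumption enters. Here I would invoke the Cauchy--Schwarz inequality, $\mathrm{Cov}[U_{\vartheta},U_{\Tilde{\vartheta}}]\leq\sqrt{\mathrm{Var}[U_{\vartheta}]\,\mathrm{Var}[U_{\Tilde{\vartheta}}]}$, and then impose $\mathrm{Var}[U_{\vartheta}]\approx\mathrm{Var}[U_{\Tilde{\vartheta}}]=b$ over the plausible range of parameters, which collapses the bound to $\sqrt{b\cdot b}=b=\mathrm{Var}[U_{\Tilde{\vartheta}}]$ and recovers the hypothesis. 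I expect this modeling assumption to be the main obstacle: unlike the algebraic identity it is an approximation rather than an equality, so the honest statement is conditional, and one should either argue empirically that the variance of $U_{\vartheta}=e^{-L_{\vartheta}}$ varies slowly near the optimum or restrict attention to the neighborhood of $\Tilde{\vartheta}$ where the approximation is tightest. The same computation simultaneously delivers claim~1, since $\mathcal{U}_{p}(\vartheta)\leq\mathcal{U}(\Tilde{\vartheta})+b/P=\mathcal{U}_{p}(\Tilde{\vartheta})$ shows that $\Tilde{\vartheta}$ also maximizes $\mathcal{U}_{p}$.
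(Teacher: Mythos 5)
Your proof is correct and follows essentially the same route as the paper's: the same substitution of $p_{i}=U_{\Tilde{\vartheta}}(X_{i})/P$ into the identity of Theorem~\ref{theo:app_cl1} to get $\mathcal{U}_{p}(\vartheta)=\mathcal{U}(\vartheta)+\frac{1}{P}\mathrm{Cov}[U_{\vartheta},U_{\Tilde{\vartheta}}]$, the same specialization at $\Tilde{\vartheta}$ where the covariance becomes $\mathrm{Var}[U_{\Tilde{\vartheta}}]$, and the same Cauchy--Schwarz bound combined with the constant-variance assumption $b=\mathrm{Var}[U_{\vartheta}]$ to conclude $\mathcal{U}_{p}(\vartheta)\leq\mathcal{U}(\Tilde{\vartheta})+\frac{b}{P}=\mathcal{U}_{p}(\Tilde{\vartheta})$. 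Your subtraction of the two identities, which shows the displayed inequality is \emph{exactly} equivalent to $\mathrm{Cov}[U_{\vartheta},U_{\Tilde{\vartheta}}]\leq\mathrm{Var}[U_{\Tilde{\vartheta}}]$ (since $P>0$), is a minor but welcome sharpening of the paper's presentation, cleanly separating the exact algebra from the approximation, rather than a different argument.
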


\begin{proof}
Based on Equation \ref{eq:p}, $\mathcal{U}_{p}(\vartheta)=\mathcal{U}(\vartheta)+\frac{1}{P}\mathrm{Cov}[U_{\vartheta},U_{\Tilde{\vartheta}}]$. Then, at the optimal point $\Tilde{\vartheta}$: $\mathcal{U}_{p}(\Tilde{\vartheta})=\mathcal{U}(\Tilde{\vartheta})+\frac{1}{P}\mathrm{Var}[U_{\Tilde{\vartheta}}]$; at any other point: $\mathcal{U}_{p}(\vartheta)\leq\mathcal{U}(\Tilde{\vartheta})+\frac{1}{P}\sqrt{\mathrm{Var}[U_{\vartheta}]\mathrm{Var}[U_{\Tilde{\vartheta}}]}$. 

The sample itself is randomly distributed. Assuming a constant $b=\mathrm{Var}[U_{\vartheta}]$, Theorem \ref{theo:cl2} follows $\mathcal{U}_{p}(\vartheta)\leq\mathcal{U}(\Tilde{\vartheta})+\frac{b}{P}=\mathcal{U}_{p}(\Tilde{\vartheta})$, i.e. $\Tilde{\vartheta}=\arg\max\limits_{\vartheta}\mathcal{U}_{p}(\vartheta)$.
\end{proof}

\begin{theorem} \label{theo:app_ri1}
The difference between the objective score with CL and that without CL will be greater if there is a larger score discrepancy between each sample.
\end{theorem}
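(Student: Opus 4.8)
The plan is to reduce the statement to a single monotonicity claim about one scalar quantity and then bound that quantity from below using Cauchy--Schwarz. First I would recall from the proof of Proposition \ref{prop:app_1} the two identities $\mathcal{U}_{p}(\vartheta)=\mathcal{U}(\vartheta)+\frac{1}{P}\mathrm{Cov}[U_{\vartheta},U_{\Tilde{\vartheta}}]$ and $\mathcal{U}_{p}(\Tilde{\vartheta})=\mathcal{U}(\Tilde{\vartheta})+\frac{1}{P}\mathrm{Var}[U_{\Tilde{\vartheta}}]$. Writing the curriculum-over-no-curriculum gain as $G(\vartheta)=[\mathcal{U}_{p}(\Tilde{\vartheta})-\mathcal{U}_{p}(\vartheta)]-[\mathcal{U}(\Tilde{\vartheta})-\mathcal{U}(\vartheta)]$ and subtracting the two identities collapses everything to
\begin{equation}
\small
G(\vartheta)=\frac{1}{P}\big(\mathrm{Var}[U_{\Tilde{\vartheta}}]-\mathrm{Cov}[U_{\vartheta},U_{\Tilde{\vartheta}}]\big).
\nonumber
\end{equation}
This is exactly the quantity the preceding \textsc{objective} paragraph identifies as measuring CL's benefit, so the theorem is equivalent to showing $G(\vartheta)$ grows with the inter-sample score dispersion.

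Next I would tie the optimal-variance term to the score spread. From Equation \ref{eq:p} we have $U_{\Tilde{\vartheta}}(X_{i})=P\,p_{i}$, hence $\mathrm{Var}[U_{\Tilde{\vartheta}}]=P^{2}\mathrm{Var}[p]$. Because the prior is the normalised optimal likelihood, $P=\sum_{i}e^{-L_{\Tilde{\vartheta}}(X_{i})}=N\hat{\mathbb{E}}[U_{\Tilde{\vartheta}}]$ depends only on the mean likelihood, which stays approximately constant over the plausible parameter range under the same constancy hypothesis used in Proposition \ref{prop:app_1}. To this approximation $\mathrm{Var}[U_{\Tilde{\vartheta}}]$ is therefore a strictly increasing function of the discrepancy $\mathrm{Var}[p]$.

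I would then control the covariance. Without curriculum the samples are drawn uniformly from the fixed training set, so $\mathrm{Var}[U_{\vartheta}]$ is the fixed constant $b$ of Proposition \ref{prop:app_1}. Cauchy--Schwarz gives $\mathrm{Cov}[U_{\vartheta},U_{\Tilde{\vartheta}}]\le\sqrt{\mathrm{Var}[U_{\vartheta}]\,\mathrm{Var}[U_{\Tilde{\vartheta}}]}$, so
\begin{equation}
\small
G(\vartheta)\ \ge\ \frac{1}{P}\Big(\mathrm{Var}[U_{\Tilde{\vartheta}}]-\sqrt{\mathrm{Var}[U_{\vartheta}]\,\mathrm{Var}[U_{\Tilde{\vartheta}}]}\Big)=\frac{\sqrt{\mathrm{Var}[U_{\Tilde{\vartheta}}]}}{P}\Big(\sqrt{\mathrm{Var}[U_{\Tilde{\vartheta}}]}-\sqrt{\mathrm{Var}[U_{\vartheta}]}\Big).
\nonumber
\end{equation}
With $\mathrm{Var}[U_{\vartheta}]$ fixed and $\mathrm{Var}[U_{\Tilde{\vartheta}}]\propto\mathrm{Var}[p]$, this lower bound is monotonically increasing in $\mathrm{Var}[p]$ once $\mathrm{Var}[U_{\Tilde{\vartheta}}]>\mathrm{Var}[U_{\vartheta}]$; equivalently, raising the score discrepancy enlarges $\mathrm{Var}[U_{\Tilde{\vartheta}}]$ and widens the gap $\mathrm{Var}[U_{\Tilde{\vartheta}}]-\mathrm{Cov}[U_{\vartheta},U_{\Tilde{\vartheta}}]$, which is precisely the chain $\mathrm{Var}[p]\uparrow\Rightarrow\mathrm{Var}[U_{\Tilde{\vartheta}}]\uparrow\Rightarrow\mathrm{Cov}[U_{\vartheta},U_{\Tilde{\vartheta}}]\downarrow$ (relative to the variance) asserted in the sketch.

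The main obstacle is not any hard inequality but the approximate-constancy bookkeeping: I must justify that $P$ (the mean optimal likelihood) and the off-optimal $\mathrm{Var}[U_{\vartheta}]$ genuinely remain fixed while $\mathrm{Var}[p]$ is varied, since in a finite sample the variance of the scores cannot be perturbed in total isolation from their mean. I would resolve this exactly as Proposition \ref{prop:app_1} does, by treating these as constants over the relevant range of plausible $\vartheta$, so the only free quantity on the right-hand side is $\mathrm{Var}[p]$ and the desired monotonicity follows cleanly.
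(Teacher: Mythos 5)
Your proof is correct under the same standing approximations the paper itself invokes ($P\approx$ constant, $\mathrm{Var}[U_{\vartheta}]=b$ fixed off-optimum), but it takes a genuinely different---and in fact more defensible---route than the paper's own proof. The paper argues via the qualitative chain $\mathrm{Var}[p]\uparrow\Rightarrow\mathrm{Var}[U_{\Tilde{\vartheta}}]\uparrow\Rightarrow\mathrm{Cov}[U_{\vartheta},U_{\Tilde{\vartheta}}]\downarrow$, and its final arrow is not a valid implication in general: at fixed correlation, inflating $\mathrm{Var}[U_{\Tilde{\vartheta}}]$ scales the covariance like $\sqrt{\mathrm{Var}[U_{\Tilde{\vartheta}}]}$, so the covariance can grow rather than shrink. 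You sidestep this by (i) making the theorem's ``difference between objective scores'' explicit as the gain $G(\vartheta)=\frac{1}{P}\bigl(\mathrm{Var}[U_{\Tilde{\vartheta}}]-\mathrm{Cov}[U_{\vartheta},U_{\Tilde{\vartheta}}]\bigr)$, which the paper only identifies informally in the preceding \textsc{objective} paragraph, and (ii) controlling the covariance with Cauchy--Schwarz to get
\begin{equation}
\small
G(\vartheta)\geq \frac{\sqrt{\mathrm{Var}[U_{\Tilde{\vartheta}}]}}{P}\Bigl(\sqrt{\mathrm{Var}[U_{\Tilde{\vartheta}}]}-\sqrt{b}\,\Bigr),
\nonumber
\end{equation}
so that $\mathrm{Var}[U_{\Tilde{\vartheta}}]=P^{2}\mathrm{Var}[p]$ grows linearly in $\mathrm{Var}[p]$ while the covariance is bounded by a term growing only like its square root; the gap therefore widens without the covariance needing to literally decrease, which is all the theorem actually requires. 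Both arguments share the reduction $\mathrm{Var}[U_{\Tilde{\vartheta}}]\propto\mathrm{Var}[p]$ via Equation \ref{eq:p} with $P$ constant, and your Cauchy--Schwarz step reuses the inequality already present in the proof of Proposition \ref{prop:app_1}, so you stay entirely within the paper's toolkit. What your route buys is a rigorous monotone lower bound where the paper's sketch has an unjustified step; the one caveat to note is that you establish monotonicity of a lower bound on $G$, not of $G$ itself, but that is strictly stronger than what the paper's chain establishes.
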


\begin{proof} 
Without CL, since the training samples are fixed, $\small{\mathrm{Var}[U_{\vartheta}]}$ is constant; With CL, based on Equation \ref{eq:p}, as $\small{P \approx \mathbb{E}[e^{-L(X_{i})}]}$ is constant, $\small{\mathrm{Var}[U_{\Tilde{\vartheta}}]\propto \mathrm{Var}[p]}$. Thus, 
\begin{equation}
\begin{aligned}
    \mathrm{Var}[p]\uparrow&\rightarrow
    \mathrm{Var}[U_{\Tilde{\vartheta}}]\uparrow\\
    &\rightarrow \mathbb{E}[(U_{\theta}(X_{i})-\mathbb{E}(U_{\theta}))(U_{\Tilde{\vartheta}}(X_{i})-\mathbb{E}(U_{\Tilde{\vartheta}}))\downarrow]\\
    &\rightarrow 
    \mathrm{Cov}[U_{\vartheta},U_{\Tilde{\vartheta}}]\downarrow
\end{aligned}
    \nonumber
\end{equation}

\end{proof}

\begin{proposition}\label{prop:app_2}
    The regional encoding mechanism contributes to SNNs' CL by increasing the variation of sample activity and assisting the model's quick response to input changes.
\end{proposition}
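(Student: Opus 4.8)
The plan is to prove the two asserted benefits separately, matching the two clauses of the statement: first that \mnametwo enlarges $\mathrm{Var}[U_{\Tilde{\vartheta}}]$ (the ``variation of sample activity''), and second that the regional clusters produce a firing rate that tracks the input mean linearly (the ``quick response to input changes''). For the first clause I would feed the variance increase into Theorem \ref{theo:app_ri1} and Proposition \ref{prop:app_1}, which already establish that a larger $\mathrm{Var}[U_{\Tilde{\vartheta}}]$ (equivalently a larger score discrepancy $\mathrm{Var}[p]$) widens the CL gap $\mathrm{Var}[U_{\Tilde{\vartheta}}]-\mathrm{Cov}[U_{\vartheta},U_{\Tilde{\vartheta}}]$; so it suffices to show the region-gated scheme raises $\mathrm{Var}[U_{\Tilde{\vartheta}}]$, after which the CL benefit follows for free.

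For the variance clause, I would fix a time $t$ and partition the input-layer neurons into the excitatory set $\mathrm{S}_E$ whose interval contains $x_t$ and the inhibitory set $\mathrm{S}_I$ whose interval does not. The key observation is that, by construction, a neuron in $\mathrm{S}_I$ receives no charging current at time $t$, so its membrane potential in Equation \ref{eq:vt} retains only the reset/leak term $-V_{th}\sum_{t_s<t}e^{-(t-t_s)/\tau}$. I would argue that this is the extreme limit of feed-forward and feed-back inhibition, that such inhibition reshapes the output firing rate, and hence that under the true class label the likelihood score $U_{\Tilde{\vartheta}}(X_i)=e^{-L_{\Tilde{\vartheta}}(X_i)}$ of Equation \ref{eq:U} is sharpened. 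Because distinct classes now drive largely disjoint neuron clusters, the per-sample values of the objective in Equation \ref{eq:loss_L_theta} spread further apart across labels, i.e.\ $\mathrm{Var}[U_{\Tilde{\vartheta}}]$ grows; combined with the reduction of the previous paragraph this closes the first clause.

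For the quick-response clause, I would decompose the current into external and recurrent parts, $I_i[t]=I^{ext}_i[t]+I^{rec}_i[t]$, split the external drive into excitatory and inhibitory streams selected with probability $P_C$, and treat the recurrent drive in the diffusion (Poisson) approximation $I^{rec}_i[t]=\mu^{rec}_i+\sigma^{rec}_i\xi^{rec}_i$. Writing the effective means and variances $\mu_{e/i}=\mu_{rec,e/i}+f_{e/i}\mu_{ext}$ and $\sigma^2_{e/i}=\sigma^2_{ext}+\sigma^2_{rec,e/i}$, I would show the variance-to-mean ratio $\beta_{e/i}=\sigma^2_{e/i}/\mu_{e/i}$ is, to leading order and since $\sigma^2_{ext}$ is negligible, a constant independent of the external drive, so that the mean-field firing rate reduces to $r_{e/i}=\mu_{e/i}/(V_{th}V_0)\propto\mu_{ext}$. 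This linear dependence on $\mu_{ext}$ is exactly the statement that a cluster's output tracks changes in the encoded value essentially instantaneously, establishing the second clause.

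The hardest step is the variance clause, and specifically the claim that replacing the uniform broadcast of classical SNNs by region-gated input strictly \emph{increases} $\mathrm{Var}[U_{\Tilde{\vartheta}}]$ rather than merely redistributing activity: the reset-only dynamics of $\mathrm{S}_I$ give only a qualitative ``inhibition'' picture, and turning it into a monotone statement about the spread of $e^{-L_{\Tilde{\vartheta}}}$ across labels requires a model of how cluster-level firing maps into the cross-entropy loss. I would therefore lean on Theorem \ref{theo:app_ri1} to reduce everything to a statement about $\mathrm{Var}[p]$ and argue at the level of inter-cluster firing-rate discrepancy, rather than attempt an exact loss computation; the remaining diffusion-approximation algebra for $r_{e/i}$ is then routine once the balanced-input assumptions ($\sigma^2_{ext}$ small, $\beta_{e/i}$ constant) are granted.
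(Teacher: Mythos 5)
Your proposal matches the paper's own proof in essentially every step: the same excitatory/inhibitory partition $\mathrm{S}_E/\mathrm{S}_I$ with the reset-only dynamics read off from Equation \ref{eq:vt}, the same extreme-inhibition argument (equally qualitative in the paper, which leans on the cited inhibition literature rather than an exact loss computation) to conclude that $\mathrm{Var}[U_{\Tilde{\vartheta}}]$ grows, and the same Poisson/diffusion mean-field reduction with $\sigma^2_{ext}$ small and $\beta_{e/i}$ constant yielding $r_{e/i}=\mu_{e/i}/(V_{th}V_{0})\propto\mu_{ext}$ for the fast-response clause. Your explicit routing of the variance increase through Theorem \ref{theo:app_ri1} and Proposition \ref{prop:app_1} is just the glue the paper supplies in the surrounding ``Objective'' discussion, so the argument is correct and essentially identical to the paper's.
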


\begin{proof}
    At time $t$, we call spiking neurons with $x_{t}\in \mathcal{I}_{m}$ as excitatory neurons $\mathrm{S}_{E}$ and neurons with $x_{t}\notin \mathcal{I}_{m}$ as inhibitory neurons $\mathrm{S}_{I}$. For $\mathrm{S}_{I}$, the membrane potential is only discharged but not charged, i.e. Equation \ref{eq:vt} only has the rightmost item. We can regard it as an extreme case of feed-forward and feed-back inhibition \cite{2022BackEISNN}. The balance of excitation and inhibition can achieve fast-response to input changes \cite{DBLP:conf/smc/TianHW19}. 
    
    Based on \cite{2022BackEISNN}, the inhibition changes piking rate, the value of Equation \ref{eq:U} increases under the true class label, so that the difference among computed values of Equation \ref{eq:loss_L_theta} of samples with different labels becomes larger. Thus, $\mathrm{Var}[U_{\Tilde{\vartheta}}]$ increases.
    
    According to Equation \ref{eq:SNN}, the neuronal input is given by external input and recurrent input $I_{i}[t]=I^{ext}_{i}[t]+I^{rec}_{i}[t]$. With a probability $P_{C}$, each neuron connecting the input layer receive inputs from $K_{E}=P_{C} \mathrm{S}_{E}$ excitatory neurons and $K_{I}=P_{C} \mathrm{S}_{I}$ inhibitory. The recurrent excitatory (inhibitory) input can be approximated by a Poisson process \cite{DBLP:conf/icc/TorabK01}. a Poisson presynaptic spike train is $I_{i}^{rec}[t]=\mu_{i}^{rec}+\sigma_{i}^{rec}\xi_{i}^{rec}$. Hereafter, we omit the neuron index $i$, and use the mpty subscript $e/i=E$ or $I$ to denote whether the population is excitatory or inhibitory. $\mu_{e/i}=\mu_{rec,e/i}+f_{e/i}\mu_{ext}$, $\sigma^{2}_{e/i}=\sigma^{2}_{ext}+\sigma^{2}_{rec,e/i}$, and $\beta_{e/i}=\frac{\sigma^{2}_{e/i}}{\mu_{e/i}}$ are mean, variance, and variance-to-mean radio of the input received by a neuron. In reality, $\sigma^{2}_{ext}$ is usually very small and $\beta_{e/i}$ can be approximated as a constant irrespective to external inputs. Referring to the derivation process in \cite{DBLP:conf/smc/TianHW19}, we directly draw a conclusion: The neural population firing rate is Equation \ref{eq:r}. It linearly encodes the external input mean. Which ensures that the network’s response to input changes is very fast.
    \begin{equation}\label{eq:r}
        r_{e/i}=\frac{\mu_{e/i}}{V_{th}V_{0}}\propto \mu_{ext}
    \end{equation}
\end{proof}

%%%%%%%%%%%%%%%%%%%%%%%%%%%%%%%%%%%%%%%%%%%%%%%%%%%%%%%%%%%%%%%%%%%%%%%%%%%%%%%
%%%%%%%%%%%%%%%%%%%%%%%%%%%%%%%%%%%%%%%%%%%%%%%%%%%%%%%%%%%%%%%%%%%%%%%%%%%%%%%

\end{document}